\documentclass{article}

\usepackage{microtype}
\usepackage{graphicx}
\usepackage{subcaption}
\usepackage{booktabs} 

\usepackage{hyperref}

\usepackage[preprint]{icml2026}

\usepackage{amsmath}
\usepackage{amssymb}
\usepackage{mathtools}
\usepackage{amsthm}
\usepackage{multirow}

\usepackage[capitalize,noabbrev]{cleveref}

\theoremstyle{plain}
\newtheorem{theorem}{Theorem}[section]

\theoremstyle{definition}

\theoremstyle{remark}

\usepackage[textsize=tiny]{todonotes}

\usepackage{xcolor}
\usepackage{xspace}
\usepackage{enumitem}
\newcommand{\ours}{RQ-MoE\xspace}

\icmltitlerunning{RQ-MoE: Residual Quantization via Mixture of Experts}

\begin{document}

\twocolumn[
  \icmltitle{RQ-MoE: Residual Quantization via Mixture of Experts for Efficient Input-Dependent Vector Compression}



  \icmlsetsymbol{equal}{*}

  \begin{icmlauthorlist}
    \icmlauthor{Zhengjia Zhong}{MAC}
    \icmlauthor{Shuyan Ke}{MAC}
    \icmlauthor{Zaizhou Lin}{MAC}
    \icmlauthor{Jiaqi Song}{MAC}
    \icmlauthor{Hongyi Lan}{MAC}
    \icmlauthor{Hui Li}{MAC}
  \end{icmlauthorlist}

  \icmlaffiliation{MAC}{Key Laboratory of Multimedia Trusted Perception and Efficient Computing, Ministry of Education of China, Xiamen University, Xiamen, China}

  \icmlcorrespondingauthor{Hui Li}{hui@xmu.edu.cn}

  \icmlkeywords{RQ-MoE, ICML}

  \vskip 0.3in
]



\printAffiliationsAndNotice{}  

\begin{abstract}

Vector quantization is a fundamental tool for compressing high-dimensional embeddings, yet existing multi-codebook methods rely on static codebooks that limit expressiveness under heterogeneous data geometry. While recent dynamic quantizers like QINCo adapt codebooks to individual inputs and improve expressiveness, their strict sequential dependencies create decoding bottlenecks. We propose Residual Quantization via Mixture of Experts (RQ-MoE), a framework combining a two-level MoE with dual-stream quantization to enable input-dependent codebook adaptation for efficient vector quantization. RQ-MoE enables dynamic codebook construction and decouples instruction from quantization, facilitating parallel decoding. Theoretically, we show that standard Residual Quantization and QINCo can be recovered as constrained special cases of RQ-MoE, and derive a guideline for setting expert dimensionality in RQ-MoE. Extensive experiments show that RQ-MoE achieves state-of-the-art or on-par performance in reconstruction and retrieval, while it can provide 6×–14× faster decoding than prior vector quantization methods. The implementation is available at \url{https://github.com/KDEGroup/RQ-MoE}. 

\end{abstract}

\section{Introduction}
\label{introduction}

Vector embeddings have become the fundamental representation paradigm for high-dimensional data in modern machine learning.
Increasing embedding dimensionality often enhances representation expressiveness, but at the cost of increased difficulty in manipulating and understanding (\citealp[the high-dimensional curse,][]{donoho2000high}), substantial storage and computational overhead, especially in large-scale deployment~\cite{scalinglaw, vectorcompression}. 
This tension makes vector compression a critical component of modern systems, as it enables compact representations that significantly reduce memory footprint and inference cost while retaining the essential information required for accurate downstream tasks~\cite{downstream}.

\begin{figure}[t]
  \begin{center}
    \centerline{\includegraphics[width=\columnwidth]{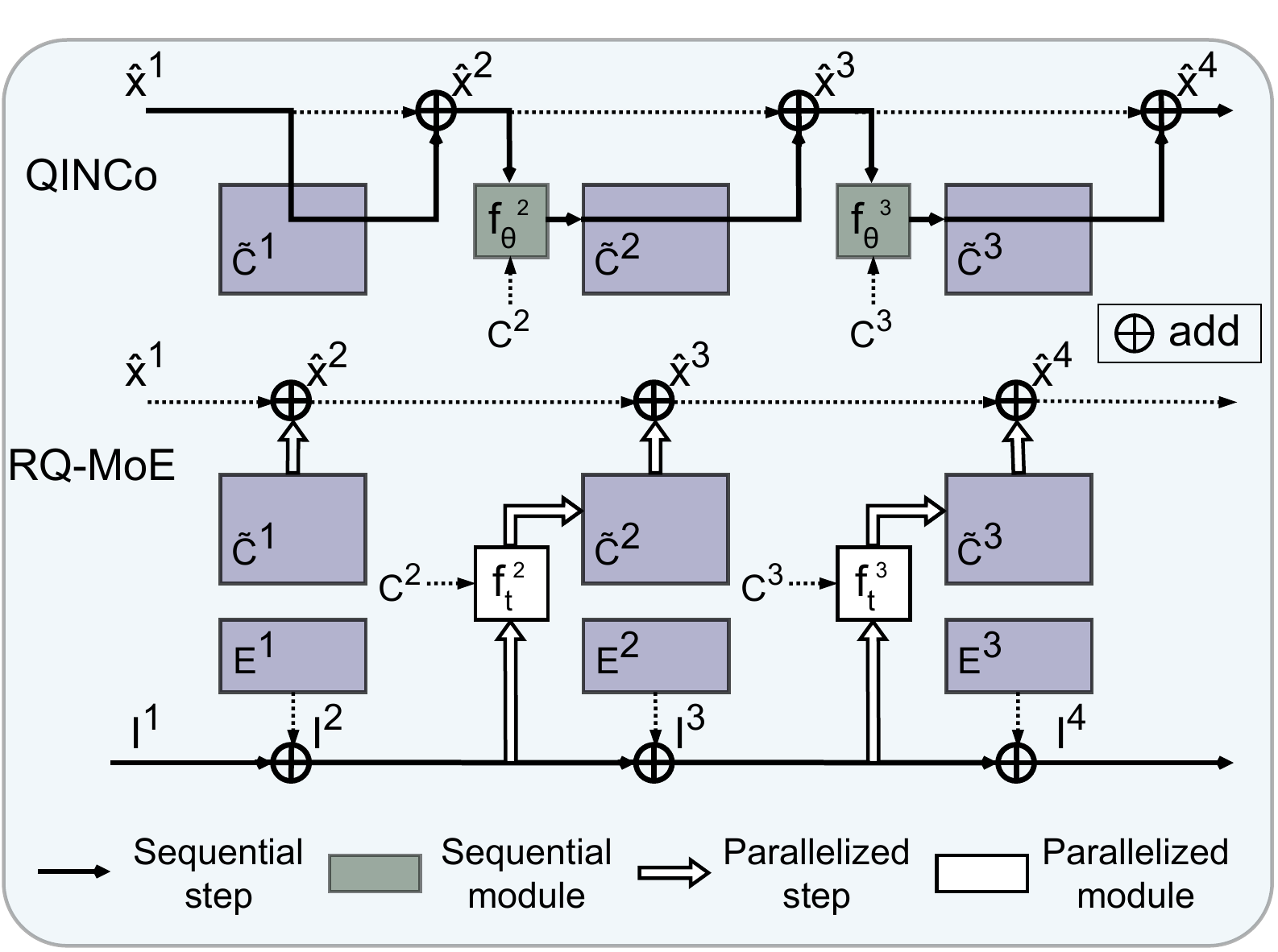}}
    \caption{Decoding comparison of QINCo and \ours. QINCo is limited by strict serial dependencies in both decoding steps and its $f_\theta$ component, while \ours achieves inter-step parallelism via a fast path and intra-step parallelism through $f_t$ (Eq.~\ref{eq:ft}).
    }
  \label{fig:comparison}
  \end{center}
\vspace{-20pt}
\end{figure}

Vector Quantization (VQ; \citealp{vq}) is a representative compression paradigm that maps each vector to a ``prototype'' vector via learning a codebook of centroids from the training vectors.
However, to achieve low quantization error, the codebook size must grow exponentially with the target bit rate.
To remedy this issue, multi-codebook quantization (MCQ) methods assign each vector with elements from multiple codebooks.
Residual Quantization (RQ; \citealp{rq}) is one exemplifying MCQ approach that adopts a recursive coarse-to-fine approximation strategy, representing a vector with multiple codewords selected sequentially from a series of smaller codebooks. 
RQ has shown its power in recommender systems and beyond~\cite{rqvae, soundstream, tiger}.

Prior VQ methods, including RQ-based approaches, rely on static codebooks, limiting expressiveness under heterogeneous data geometry. 
From a manifold learning perspective, vector compressibility arises from the fact that data distributions concentrate on low-dimensional manifolds~\cite{manifold, manifold2}. 
Using static codebooks implicitly assumes homogeneous local geometry across different regions of the embedding space, an assumption that rarely holds in complex real-world datasets. 
Recent dynamic-codebook quantizers QINCo~\cite{qinco} and its successor~\cite{qinco2} adapt codebooks at each quantization step of RQ.
But they introduce strict sequential dependencies that substantially hinder decoding efficiency as demonstrated in Fig.~\ref{fig:comparison}.

The fundamental limitation of static codebooks lies in the rigid partitioning of the vector space. 
Since different regions of the embedding manifold may require different quantization strategies, it is desirable to employ \emph{conditional computation}, where the quantizer activates specialized codebooks tailored to the vector's local structure. 
Along this direction, Mixture of Experts (\citealp[MoE,][]{MoE}) is a natural solution as it scales capacity through dynamically activating expert sub-networks based on the local characteristics of the input distribution, without a proportional increase in inference cost.
Nevertheless, directly applying MoE to quantization requires storing expert routing decisions for each input, which increases bit overhead and complicates optimization (see discussion in Sec.~\ref{sec:implicit}).

To alleviate the issues of using static codebooks while avoiding increasing overhead, we propose Residual Quantization via Mixture of Experts (\ours), a novel framework that leverages a two-level MoE with a dual-stream quantization process to enable input-dependent codebook adaptation for efficient vector quantization.
At its core, the two-level MoE mechanism enables the dynamic construction of input-dependent codebooks, while the dual-stream quantization decouples instruction from quantization, allowing specialized codebooks to be instantiated independently and supporting parallel reconstruction.

The contributions of this work are summarized as follows:
\begin{itemize}
\item \textbf{Input-Dependent, Index-Reused RQ-MoE Framework:} We introduce the two-level MoE residual quantization framework that achieves input-dependent codebook adaptation with zero additional bit overhead. Routing is embedded in the quantization indices themselves, so each index simultaneously determines the expert selection for the next step.

\item \textbf{Efficiency Boosting with Parallelizable Decoding:} We propose dual-stream quantization that decouples the instruction and reconstruction paths, removing the sequential dependency present in prior dynamic quantizers and enabling fully parallel decoding.  

\item \textbf{Reconstruction with Normalized Residual Loss:} To enhance robust training, we design a novel Normalized Residual Loss (NRL) for reconstruction. By normalizing the contribution of each step based on the ratio of successive residuals, NRL stabilizes the training of deep-level experts and reduces sensitivity to outliers.

\item \textbf{Theoretical Analysis:} We prove that standard RQ and the previous dynamic-codebook quantization method QINCo can be recovered as constrained special cases of RQ-MoE, and derive a guideline for setting expert dimensionality in \ours. We further analyze the complexity of \ours and existing methods, demonstrating the efficiency of \ours.  
\end{itemize}

Experiments show that \ours achieves state-of-the-art or on-par performance in reconstruction and retrieval, while it can provide $6\times$–$14\times$ faster decoding than prior methods.
\section{Related Work}
\label{related_work}

\noindent\textbf{Static-Codebook Vector Quantization.} 
The classical method VQ~\cite{vq} maps each vector to a single codeword selected from a codebook of size $K$, resulting in limited representational capacity and high reconstruction distortion. 
Multi-codebook Quantization (MCQ) addresses this limitation by employing multiple codebooks. 
Various MCQ methods~\cite{opq,aq,lsq} have been proposed. 
PQ~\cite{pq} partitions vectors into independent subspaces and quantizes each subspace using a low-dimensional codebook, reconstructing vectors through a Cartesian product.
RQ~\cite{rq} adopts a coarse-to-fine strategy by successively quantizing the residuals from previous steps.
More recent neural MCQ methods, such as UNQ~\cite{unq}, leverage deep networks to determine codeword indices from learned representations rather than raw Euclidean distances. 
Compared to \ours, these approaches rely on static codebooks at inference, where the quantization space remains fixed and fails to adapt to vector diversity.

\begin{figure*}[t]
  \begin{center}
    \centerline{\includegraphics[width=0.95\textwidth]{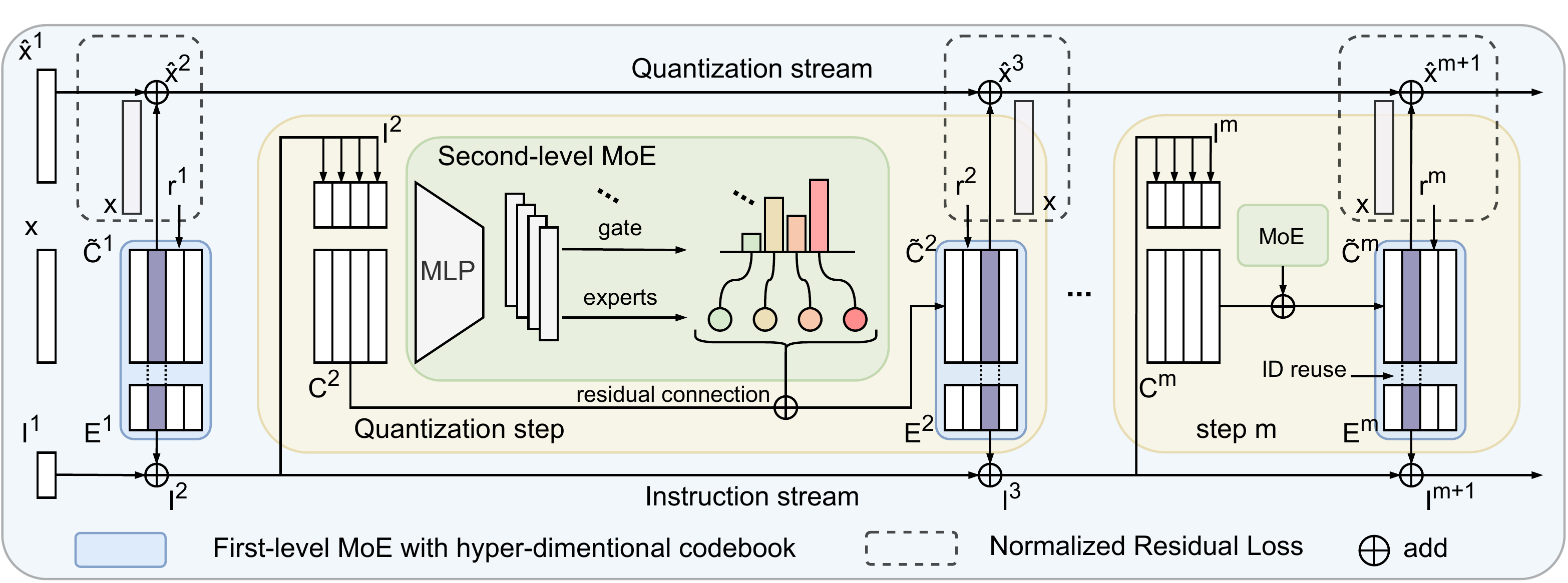}}
    \caption{
    Overview of \ours. It features a dual-stream design equipped with a two-level MoE mechanism. The Instruction Stream (bottom) leverages hyper-dimensional codebooks and first-level MoE to accumulate expert signals via implicit routing. These signals condition the Quantization Stream (top), where the second-level MoE dynamically adapts base codebooks for precise residual reconstruction. This decoupled structure supports efficient parallel decoding.
    }
    \label{fig:method}
  \end{center}
\vspace{-20pt}
\end{figure*}

\noindent\textbf{Dynamic-Codebook Vector Quantization.} 
QINCo~\cite{qinco} raises the concept of data-dependent codebooks by generating specialized codebooks at each step, conditioning a residual-MLP on the partial reconstruction from previous steps. 
This design significantly improves reconstruction and retrieval accuracy over static baselines. 
However, it introduces a strong sequential dependency,
as each step relies on the outputs of its predecessors. 
While its recent successor QINCo2~\cite{qinco2} improves efficiency through approximate predictors and candidate pre-selection, it further relies on beam search to compensate for the resulting accuracy degradation, leaving the decoding process inherently sequential.

\noindent\textbf{Mixture of Experts.} 
Mixture of Experts (MoE)~\cite{MoE} mitigates parameter interference and crosstalk in large-scale neural networks by dynamically routing inputs to a set of parallel expert sub-networks. 
MoE significantly expands model capacity and expressive power while keeping inference cost nearly constant~\cite{gshard, mixtral}. 
Hence, it has become the prevalent architecture for constructing Large Language Models~\cite{CaiJWTKH25}.
Modern implementations often employ sparse gating to activate only a subset of experts~\cite{spasemoe}, improving computational efficiency and stability~\cite{switchtransformer, deepseek}. 

\section{Preliminary}
\label{preliminary}

\noindent\textbf{Residual Quantization.}
RQ maps a high-dimensional vector $\mathbf{x} \in \mathbb{R}^D$ into a combination of discrete indices. 
Formally, we define a set of $M$ codebooks indexed by $m \in \{1, \dots, M\}$,
where each codebook $\mathcal{C}^m = \{\mathbf{c}_k^m\}_{k=1}^K \subset \mathbb{R}^D$
contains $K$ centroids (i.e., codewords) of dimension $D$. 
The quantization process proceeds iteratively through $M$ stages:
\begin{enumerate}
    \item \textbf{Initialization:} Let the initial reconstruction vector be $\hat{\mathbf{x}}^1 = \mathbf{0}$, and the initial residual be $\mathbf{r}^1 = \mathbf{x}$.
    
    \item \textbf{Recursive Approximation:} At each stage $m$, RQ performs the nearest neighbor search to select the optimal index $i^m$ from the current codebook $\mathcal{C}^m$:
    \begin{equation}
        i^m = \arg\min_{k \in \{1, \dots, K\}} \|\mathbf{r}^m - \mathbf{c}_k^m\|_2^2.
        \label{eq:get_id}
    \end{equation}
    
    \item \textbf{Residual Update:} The reconstruction vector is updated as $\hat{\mathbf{x}}^{m+1} = \hat{\mathbf{x}}^m + \mathbf{c}_{i^m}^m$, and the residual for the subsequent stage is computed as $\mathbf{r}^{m+1} = \mathbf{x} - \hat{\mathbf{x}}^{m+1}$.
\end{enumerate}

Ultimately, the original vector $\mathbf{x}$ is represented by the sequence of indices $(i^1, i^2, \dots, i^M)$. The final reconstructed vector $\hat{\mathbf{x}}$ is obtained by the summation of the selected codewords across all stages: $\hat{\mathbf{x}} = \sum_{m=1}^M \mathbf{c}_{i^m}^m$, and RQ is optimized by minimizing the reconstruction loss between residual values and $\mathbf{x}$.

\noindent\textbf{Mixture of Experts.} 
MoE scales model capacity by conditioned activation of sub-networks. 
Given $N$ expert functions $\{\mathcal{E}_i\}_{i=1}^N$ and a gating network (router) $G(\cdot)$, it defines the output $\mathbf{y}$ for an input $\mathbf{v}$ as:
\begin{equation}
    \mathbf{y} = \sum_{i \in \mathcal{T}} G(\mathbf{v})_i \mathcal{E}_i(\mathbf{v}),
\end{equation}
where $\mathcal{T}$ are indices selected by the router (typically the top-$Q$ entries) and $Q\ll N$, ensuring that only a fraction of parameters are activated per input.

\section{Methodology}
\label{method}

Fig.~\ref{fig:method} depicts \ours. 
At a high level, it combines a two-level MoE mechanism with a dual-stream quantization process to achieve adaptive codebook selection and parallelizable decoding without increasing overhead. 
The first-level MoE determines where a vector lies in the discrete representation space, while the second-level MoE determines how that region is locally approximated. 
The dual-stream quantization decouples the instruction stream from the quantization stream, allowing specialized codebooks to be instantiated independently and supporting parallel reconstruction.

Specifically, \ours conducts implicit routing with hyper-dimensional codebooks and index reuse in the first-level MoE (Sec.~\ref{sec:implicit}). 
In dual-stream quantization, the \emph{instruction stream} propagates expert signals across quantization steps, while the \emph{quantization stream} performs residual quantization using input-adaptive codebooks operated via the second-level MoE (Sec.~\ref{sec:dual-stream}). 
To facilitate robust training, we further design a Normalized Residual Loss for guiding reconstruction (Sec.~\ref{sec:training}).

Together, these components enable high-fidelity reconstruction, efficient parallel inference, and effective utilization of input-dependent local manifold structure.

\begin{figure}[t]
  \begin{center}
    \centerline{\includegraphics[width=0.95\columnwidth]{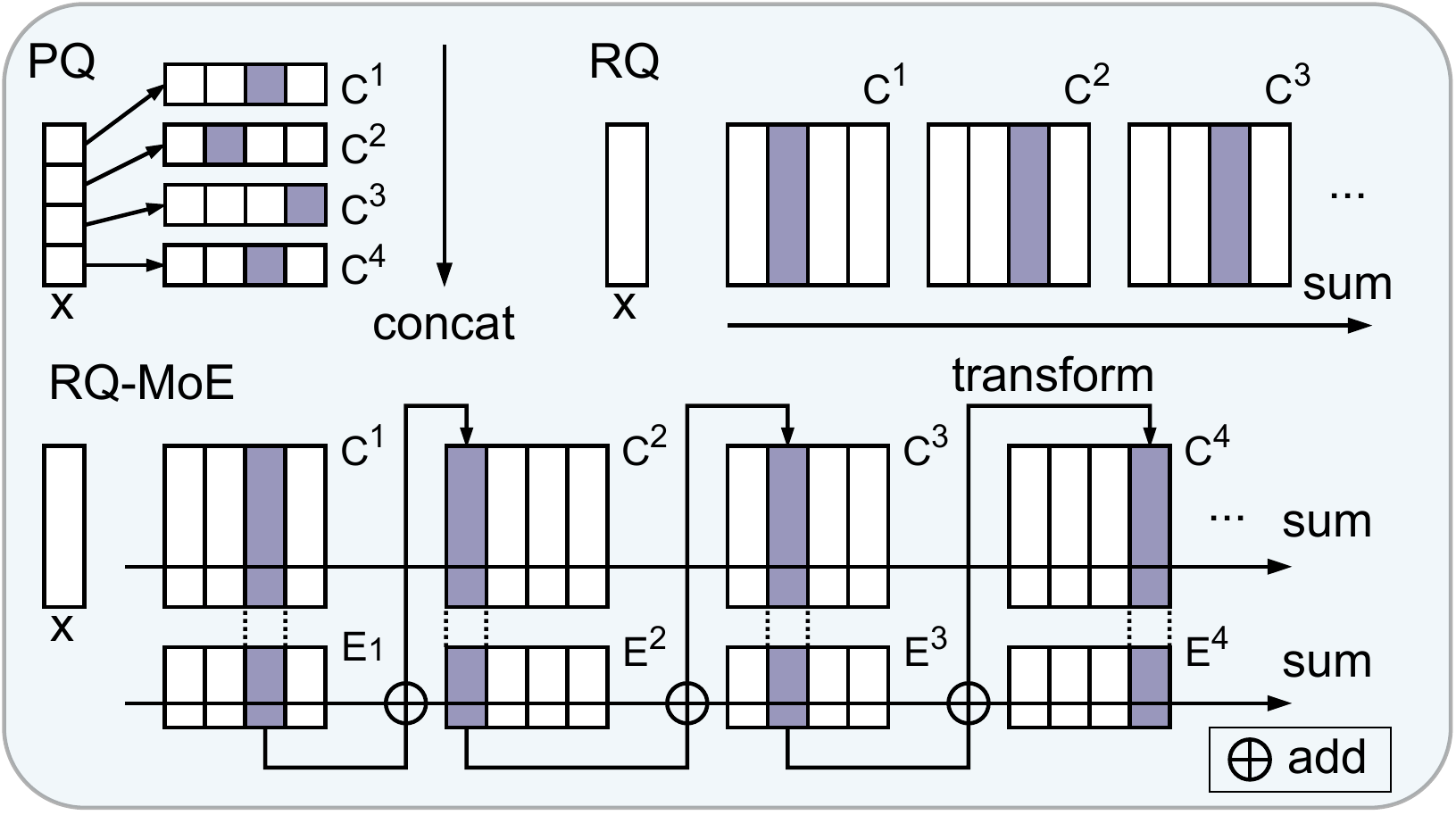}}
    \caption{
     Comparison of quantization architectures. PQ uses sub-dimensional codebooks; RQ employs codebooks of the same dimension as the input; \ours utilizes an hyper-dimensional codebook to capture local manifold information.
    }
    \label{fig:series}
  \end{center}
\vspace{-20pt}
\end{figure}

\subsection{Implicit Routing with Hyper-Dimensional Codebook and Index Reuse}
\label{sec:implicit}

A straightforward adaptation of MoE to RQ might involve multiple codebooks and an explicit gating network. However, this naive integration has two issues: (i) storing expert routing decisions requires additional bits beyond the quantization indices (e.g., selecting among $4$ experts requires $2$ extra bits, resulting in a $25\%$ storage overhead for a $256$-entry codebook), and (ii) under a fixed bit budget, allocating bits to explicit expert selection is inherently inefficient. It is often more effective to utilize this capacity to expand the codebook size directly, rather than consuming the budget on routing overhead.

From a functional perspective, RQ can be interpreted as a degenerate MoE process, where nearest-neighbor assignment acts as implicit routing and the selected codeword serves as a top-1 expert.

To better leverage the power of MoE, \ours fuses expert routing with quantization through \emph{hyper-dimensional codebooks} and \emph{index reuse} in the first-level MoE. 
Instead of relying on a separate gating network, the same codeword selection operation simultaneously determines the quantized output and the activated expert component. 

As illustrated in Fig.~\ref{fig:series}, we define a codebook $\mathcal{W}^m = \{\mathbf{w}_k^m\}_{k=1}^K$ for each quantization step $m$, where each entry $\mathbf{w}_k^m \in \mathbb{R}^{D + D_e}$ is a $(D + D_e)$-dimensional vector, which can be partitioned into two functional components:
\begin{equation}
    \mathbf{w}_k^m = [\mathbf{c}_k^m ; \mathbf{e}_k^m],
    \label{eq:hcode}
\end{equation}
where $\mathbf{c}_k^m \in \mathbb{R}^D$ denotes the quantization component used for residual matching, and $\mathbf{e}_k^m \in \mathbb{R}^{D_e}$ represents the expert component that encodes local manifold characteristics of the input. 
For notational convenience, we refer to the collection $\{\mathbf{c}_k^m\}_{k=1}^K$ as the \emph{base codebook} $\mathbf{C}^m$ and $\{\mathbf{e}_k^m\}_{k=1}^K$ as the \emph{expert codebook} $\mathbf{E}^m$, while both components share the same index throughout the quantization process.

The index $i^m$ is obtained via Eq.~\eqref{eq:get_id}, and it also acts as the router $G(\cdot)$ in the first-level MoE. 
Once $i^m$ is selected, the corresponding expert vector $\mathbf{e}_{i^m}^m$ is activated and passed as the instruction to the next step, ensuring that routing decisions are directly aligned with the quantization trajectory.


\subsection{Dual-Stream Quantization}
\label{sec:dual-stream}

Existing dynamic quantizers tightly couple conditioning generation with reconstruction, resulting in entangled optimization dynamics and inherently sequential decoding dependencies. \ours removes this coupling by decoupling instruction generation from reconstruction, enabling both parallel decoding and more flexible contextual adaptation.

The core architecture of RQ-MoE follows the iterative refinement logic of RQ and operates within a dual-stream framework consisting of an instruction stream and a quantization stream. 
The second-level MoE uses the instruction stream to transform the base codebooks into input-specific codebooks at each step, enabling the quantization stream to select codewords that are aligned with the local manifold of the input vector.

The instruction stream maintains a dynamic instruction vector $\mathbf{I}^m \in \mathbb{R}^{D_e}$, which serves as a latent memory recording the cumulative expert information across quantization steps. 
At the first step ($m=1$), the instruction vector is initialized as a zero vector, $\mathbf{I}^1 = \mathbf{0}$. 
For subsequent steps ($m > 1$), $\mathbf{I}^m$ is updated based on the previous instruction and the selected expert component:
\begin{equation}
    \mathbf{I}^m = f_{ins}(\mathbf{I}^{m-1}, \mathbf{E}_{i^{m-1}}^{m-1}) = \mathbf{I}^{m-1} + \mathbf{E}_{i^{m-1}}^{m-1},
    \label{eq:instruction}
\end{equation}
where the additive update is a simple choice to efficiently propagate expert signals and enable parallelizable decoding. 
This mechanism ensures that the instruction signal $\mathbf{I}^m$ carries the full context of the preceding selections to inform the next quantization step.

The quantization stream operates similarly to RQ, where the input vector is approximated by a sequence of discrete codewords. 
However, instead of utilizing static base codebooks, it employs \emph{dynamic codebooks} $\tilde{\mathcal{C}}^m$ that are conditioned on the accumulated context $\mathbf{I}^m$ of the instruction stream. 
For the $m$-th step, the optimal index $i^m$ is selected by minimizing the Euclidean distance between the current residual $\mathbf{r}^m$ and the dynamic codewords:
\begin{equation}
    i^m = \arg\min_{k \in \{1, \dots, K\}} \|\mathbf{r}^m - \tilde{\mathbf{c}}_k^m\|_2^2,
\end{equation}
where $\tilde{\mathbf{c}}_k^m \in \mathbb{R}^D$ denotes the $k$-th codeword of the dynamic codebook at step $m$. 
The transformation from a static base codebook $\mathcal{C}^m$ to a dynamic codebook $\tilde{\mathcal{C}}^m$ is governed by a transformation function $f_t$, conditioned on the previous instruction $\mathbf{I}^m$:
\begin{equation}
    \tilde{\mathbf{c}}_k^m = f_t(\mathbf{c}_k^m, \mathbf{I}^m),
    \label{eq:ft}
\end{equation}
where $f_t$ is implemented as a weighted MoE module. 
Specifically, each base codeword $\mathbf{c}_k^m$ is concatenated with the instruction vector $\mathbf{I}^m$ and projected via a linear layer to a $D$-dimensional space:
\begin{equation}
    \mathbf{z}_k^m = \mathrm{Linear}([\mathbf{c}_k^m ; \mathbf{I}^m]).
\end{equation}
The projected vector $\mathbf{z}_k^m \in \mathbb{R}^D$ is processed by the second-level MoE comprising a gating network and $N$ experts. 
The gating network generates mixture weights over the experts via a linear transformation followed by a softmax:
\begin{equation}
    \boldsymbol{\alpha}_k^m = \mathrm{softmax}\big(\mathrm{Linear}(\mathbf{z}_k^m)\big),
\end{equation}
where $\boldsymbol{\alpha}_k^m \in \mathbb{R}^N$ denotes the weights for the $N$ experts.

Each expert $\mathcal{E}^m_n$ is implemented as an $L$-layer MLP, where each layer follows a bottleneck residual structure with an internal expansion to $H$ dimensions and ReLU activation:
\begin{equation}
    \mathbf{h}_l = \mathbf{h}_{l-1} + \mathrm{MLP}_l(\mathbf{h}_{l-1}), \quad l = 1, \dots, L,
\end{equation}
and the input-output relation for the expert is
\begin{equation}
    \mathcal{E}_n(\mathbf{z}_k^m) = \mathbf{h}_L, \quad \mathbf{h}_0 = \mathbf{z}_k^m.
\end{equation}
Expert outputs are aggregated according to gating weights:
\begin{equation}
    \Delta \mathbf{c}_k^m = \sum_{n=1}^{N} \boldsymbol{\alpha}_{k,n}^m \, \mathcal{E}_n(\mathbf{z}_k^m),
\end{equation}
and the updated codeword is obtained by adding the deformation to the base codeword:
\begin{equation}
    \tilde{\mathbf{c}}_k^m = \mathbf{c}_k^m + \Delta \mathbf{c}_k^m.
\end{equation}

For the first quantization step, the base codebook is used directly as the dynamic codebook, i.e., $\tilde{\mathcal{C}}^1 = \mathcal{C}^1$. By combining multiple expert deformations, the resulting dynamic codewords better align with local structure of the input's manifold. This enables the instruction stream to guide input-dependent adaptations in subsequent steps, effectively tailoring each codebook to the local geometry of the data.

The two-level MoE performs a coarse-to-fine adaptation process, where the first-level MoE accumulates contextual routing information through the instruction stream, while the second-level MoE dynamically combines expert basis functions to adapt the codebook within localized subspaces.

\subsection{Reconstruction with Normalized Residual Loss}
\label{sec:training}

Previous RQ-based methods commonly use either the final-step mean squared error (MSE), which evaluates only the last residual, or a per-step MSE, which supervises all intermediate residuals, as the reconstruction loss. 
We empirically find that using only the final-step loss leads to unstable training as mentioned by~\citet{qinco}, since shallow layers receive insufficient gradient signals, whereas applying a per-step MSE tends to overemphasize early residuals at the expense of later steps. 

To remedy these problems, we propose the \emph{normalized residual loss} (NRL).
NRL measures the contribution of each residual step relative to its predecessor, ensuring that each step is optimized proportionally to its current reconstruction difficulty. 
Meanwhile, it automatically down-weights extreme residuals, addressing key stability and robustness limitations of the MSE loss.

We first define the relative residual ratio as: 
\begin{equation}
    \rho^{m} = \frac{\|\mathbf{r}^{m+1}\|_2^2}{\text{sg}(\|\mathbf{r}^m\|_2^2) + \epsilon},
\end{equation}
where $\text{sg}(\cdot)$ denotes the stop-gradient operator and $\epsilon$ is a small constant for numerical stability.
The normalization term $\rho^m$ captures the relative improvement achieved at step $m$, quantifying how much the current step reduces the remaining reconstruction error compared to its predecessor. 
This sequential formulation naturally aligns with RQ, where later steps are explicitly designed to refine earlier approximations rather than solve parallel objectives. 
To mitigate instability caused by varying residual scales across steps,
we adopt a logarithmic transformation for NRL: 
\begin{equation}
    \mathcal{L}_{\text{NRL}} = \sum_{m=1}^M \log \left( 1 + \rho^{m} \right).
\end{equation}

Unlike ratio-based normalization losses commonly used in multi-task learning, where ratios balance gradients across independent objectives~\cite{loss, loss2}, NRL operates along a sequential refinement chain. 
Each residual $\mathbf{r}^{m+1}$ is not an independent target, but the direct outcome of all preceding quantization steps.

\noindent\textbf{Comparison to MSE Loss.} 
NRL not only balances residual contributions across quantization steps but also improves robustness compared to the standard MSE loss. 

For a residual $\mathbf{r}^{m+1}$ at quantization step $m$, the gradient of the MSE loss is: 
\begin{equation}
    \nabla_{\mathbf{r}^{m+1}} \mathcal{L}_{\text{MSE}} = 2 \|\mathbf{r}^{m+1}\|_2,
\end{equation}
which directly scales with the residual magnitude. 
As RQ proceeds, residuals typically decrease across steps, causing gradients for later steps to become progressively smaller and leaving later refinements under-optimized, despite the model's capacity to further reduce reconstruction error. 
Moreover, the MSE loss exhibits an \emph{unbounded influence function}.
When $\mathbf{r}^{m+1}$ is unusually large, the gradient grows proportionally, encouraging overfitting to outliers and potentially leading to gradient explosion.

From a robust estimation perspective, the gradient induced by NRL exhibits the behavior of a \emph{redescending influence function}. 
In robust statistics, such influence functions are characteristic of redescending M-estimators, which limit the impact of extreme residuals. 
Specifically, the gradient of NRL with respect to the residual is:
\begin{equation}
    \nabla_{\mathbf{r}^{m+1}} \mathcal{L}_{\text{NRL}} = \frac{2 \|\mathbf{r}^{m+1}\|_2}{\|\mathbf{r}^{m}\|_2^2 + \|\mathbf{r}^{m+1}\|_2^2 + \epsilon} = \frac{2 \|\mathbf{r}^{m+1}\|_2}{\|\mathbf{r}^{m+1}\|_2^2 + C},
\end{equation}
where $\|\mathbf{r}^m\|_2$ is treated as a constant due to the stop-gradient operation. 
This formulation dynamically normalizes the gradient using both the previous-step residual and the current residual magnitude, preventing early steps from dominating optimization and enabling meaningful updates at later steps.
For moderate residual values, the gradient magnitude increases with $\|\mathbf{r}^{m+1}\|_2$, preserving sensitivity to reconstruction errors. However, as $\|\mathbf{r}^{m+1}\|_2 \to \infty$, the gradient gradually approaches zero, yielding a bounded and redescending influence function. 

\subsection{Parallel Inference}

Compared to existing dynamic-codebook quantization methods like QINCo~\cite{qinco,qinco2}, RQ-MoE enables a substantial improvement in inference efficiency, particularly during decoding.

During encoding, RQ-MoE follows the sequential refinement process of RQ. 
To determine the index sequence $\{i^1, i^2, \dots, i^M\}$, the residual at each step must be computed to perform the nearest neighbor search in the step-specific codebook $\tilde{\mathcal{C}}^m$. 
Therefore, the encoding process maintains a sequential dependency similar to existing adaptive neural quantizers.
However, in our design, the efficiency of encoding can also be boosted in theory by using more experts.

\ours brings a substantial efficiency gain during decoding, where the goal is to reconstruct $\hat{\mathbf{x}}$ from a given index sequence. 
In QINCo, decoding is inherently sequential, as generating the $m$-th dynamic codebook requires access to the cumulative reconstruction from all previous steps $\{1, \dots, m-1\}$. 
Even in its subsequent extension~\cite{qinco2}, decoding speedups are primarily achieved via Pairwise Additive Decoder (PAD), which uses larger codebooks for efficiencient and do not eliminate the underlying sequential dependency.

In contrast, RQ-MoE achieves parallel decoding through an explicit decoupling of the instruction stream and the quantization stream. 
Crucially, the construction of instruction vectors does not depend on the step-wise dynamic codebooks or intermediate reconstructions. 
Instead, each instruction vector is updated via Eq.~\ref{eq:instruction}, which relies only on the discrete indices and pre-stored expert components. 
As shown in Fig.~\ref{fig:comparison}, this design allows a fast instruction pre-pass, where all instruction vectors $\{\mathbf{I}_1, \dots, \mathbf{I}_M\}$ are computed via simple lookups and additions and they are independent of the dynamic codebook generation. 
Once instantiated, the instruction vectors enable the parallel construction of the step-wise specialized codebooks $\{\tilde{\mathcal{C}}^m\}_{m=1}^M$. 
The final reconstruction $\hat{\mathbf{x}} = \sum_{m=1}^M \tilde{\mathbf{c}}_{i^m}^m$ follows the standard RQ summation, removing the sequential bottleneck present in QINCo. 
Moreover, this parallelism is not limited to the step level. 
The second-level MoE transformation within each step can also be executed in parallel across experts, further increasing throughput.

\begin{table}[t]
\centering
\caption{Complexity of encoding and decoding per vector (in FLOPS). For UNQ, $b$ and $H'$ denote codeword dimensionality and hidden layer dimensionality, respectively. For \ours, we assume $D_e = D$ for simplicity.}
\label{tab:complexity}
\fontsize{8pt}{10pt}\selectfont
\setlength{\tabcolsep}{4pt}

\begin{tabular}{lcc}
\toprule
& \multicolumn{1}{c}{\textbf{Encoding}} & \multicolumn{1}{c}{\textbf{Decoding}} \\
\midrule
UNQ   & $H'(D + H + Mb + MK)$    & $H'(b + H' + D + M)$  \\
QINCo & $2MKD(D + LH)$          & $2MD(D + LH)$         \\
\ours & $2MKD(D + NLH + N)$     & $2MD(D + NLH + N)$    \\
\bottomrule
\end{tabular}
\vspace{-10pt}
\end{table}
\noindent\textbf{Complexity.} 
We further analyze the complexity of encoding and decoding per vector for \ours, as shown in Tab.~\ref{tab:complexity}. 
Compared to QINCo, the additional computational overhead of \ours primarily stems from the gating and weighting mechanisms of MoE, which is marginal compared to the total complexity. 
By enabling step-level parallelism, a theoretical $M\times$ speedup in decoding can be achieved. 
Furthermore, by incorporating second-level parallelism (i.e., using more experts under a fixed budget with constant \(N \times L\)), \emph{\ours\ can theoretically attain an \(N\times\) acceleration in encoding and an \((M \times N)\times\) speedup in decoding.}

\begin{table*}[t]
\centering
\caption{Comparison of reconstruction error (MSE) and retrieval recall (R@$k$, \%) on four benchmark datasets using a training set of 10M samples. The best results in each category are highlighted in \textbf{bold}. We report the quantization fidelity (MSE) and retrieval accuracy (Recall@$K$) under codebook budgets of 8 and 16 bytes. Note that the MSE values for BigANN and FB-ssnpp are reported in units of $10^4$. For \ours, we employ $N=1$ and $L=16$ by default, and $L=12$ for the Contriever dataset to align with the QINCo configuration.}
\label{tab:results_10M}
\small
\setlength{\tabcolsep}{3pt}
\begin{tabular}{l cccc cccc cccc cccc}
\toprule
 & \multicolumn{4}{c}{\textbf{Deep1M ($D=96$)}} & \multicolumn{4}{c}{\textbf{BigANN1M ($D=128$)}} & \multicolumn{4}{c}{\textbf{FB-ssnpp1M ($D=256$)}} & \multicolumn{4}{c}{\textbf{Contriever1M ($D=768$)}} \\
\cmidrule(lr){2-5} \cmidrule(lr){6-9} \cmidrule(lr){10-13} \cmidrule(lr){14-17}
Method & MSE & R@1 & R@10 & R@100 & MSE & R@1 & R@10 & R@100 & MSE & R@1 & R@10 & R@100 & MSE & R@1 & R@10 & R@100 \\
\midrule
 & \multicolumn{16}{c}{\textbf{8 bytes}} \\
\midrule
OPQ     
& 0.25 & 15.2 & 51.2 & 87.8 
& 2.97 & 21.4 & 63.9 & 95.3 
& 9.51 &  2.5 &  5.0 & 11.3 
& 1.87 &  8.5 & 24.4 & 50.6 \\
RQ      
& 0.19 & 22.3 & 64.7 & 95.5 
& 2.49 & 27.8 & 75.2 & 98.1 
& 9.18 &  2.7 &  5.9 & 14.4 
& 1.81 &  9.8 & 27.4 & 53.1 \\
        
LSQ  
& 0.17 & 24.6 & 68.7 & 96.8 
& 1.89 & 30.9 & 78.5 & 98.8 
& 8.82 &  3.4 &  8.0 & 18.0 
& 1.64 & 13.3 & 35.1 & 62.9 \\

UNQ   
& 0.14 & 29.2 & 77.5 & 98.8 
& 1.12 & 39.7 & 88.3 & 99.6 
&  --- &  --- &  --- &  ---  
&  --- &  --- &  --- &  --- \\

QINCo 
& 0.12 & 36.3 & 84.6 & 99.4 
& 1.12 & 45.1 & 91.0 & 99.7 
& 8.67 &  3.6 &  8.9 & 20.7 
& 1.42 & 20.4 & 47.1 & 74.5 \\

\ours 
& \textbf{0.12} & \textbf{36.5} & \textbf{85.0} & \textbf{99.5} 
& \textbf{1.10} & \textbf{46.2} & \textbf{91.7} & \textbf{99.7} 
& \textbf{8.64} & \textbf{3.6}  & \textbf{9.0}  & \textbf{20.9} 
& \textbf{1.38} & \textbf{21.6} & \textbf{48.9} & \textbf{76.2} \\

\midrule
 & \multicolumn{16}{c}{\textbf{16 bytes}} \\
\midrule

OPQ   
& 0.14 & 34.9 & 82.2 & 98.9 
& 1.79 & 41.5 & 89.6 & 99.8 
& 7.25 &  5.1 & 12.3 & 27.3 
& 1.71 & 18.0 & 40.8 & 98.9 \\

RQ    
& 0.10 & 43.0 & 90.6 & 99.9 
& 1.30 & 49.2 & 95.1 & 100.0 
& 6.99 &  5.4 & 13.0 & 29.7
& 1.65 & 19.9 & 43.6 & 68.4 \\

LSQ   
& 0.09 & 42.0 & 89.5 & 99.9 
& 0.98 & 51.0 & 95.4 & 100.0 
& 6.56 &  6.3 & 16.0 & 34.7 
& 1.32 & 25.7 & 54.6 & 79.8 \\

UNQ   
& 0.06 & 51.5 & 95.8 & 100.0 
& 0.47 & 64.3 & 98.8 & 100.0 
&  --- &  --- &  --- &  ---  
&  --- &  --- &  --- &  --- \\

QINCo 
& 0.05 & 59.6 & 98.1 & 100.0 
& 0.33 & 71.6 & 99.5 & 100.0 
& 6.58 &  6.4 & 16.8 & 35.6 
& 1.10 & 31.0 & 61.7 & 85.4 \\

\ours
& \textbf{0.05} & \textbf{60.2} & \textbf{98.6} & \textbf{100.0} 
& \textbf{0.30} & \textbf{72.3} & \textbf{99.6} & \textbf{100.0} 
& \textbf{6.53} & \textbf{6.5}  & \textbf{17.0} & \textbf{36.1} 
& \textbf{1.08} & \textbf{31.4} & \textbf{62.5} & \textbf{87.2} \\

\bottomrule
\end{tabular}
\end{table*}
\subsection{Theoretical Analysis}
\label{sec:Theorem}

RQ-based quantization methods can be viewed as a sequential decision process that satisfies a Markov property. 
For standard RQ, the index selection is a memoryless policy conditioned on the current residual at step $m$ ($m\ne1$):
\begin{align}
    P_{\text{RQ}}(i^{m}|i^{m-1}, \dots, i^{1}) &= P_{\text{RQ}}(i^{m}|x - \sum_{p=1}^{m-1}{C_{i^p}^p}) \\
    &= P_{\text{RQ}}(i^{m}|r^m).
\end{align}
Dynamic-codebook VQ can be interpreted as reducing the \emph{conditional entropy} in this process:
\begin{equation}
    H(i^{m} \mid \mathbf{I}^{m-1}, \mathbf{r}^{m}) \le H(i^{m} \mid \mathbf{r}^{m}). 
\end{equation}
Under this framework, we can prove\footnote{Proofs are provided in Appendix~\ref{pf:1} and Appendix \ref{pf:2}.}:
\begin{theorem}
\label{thm:unification}
Standard RQ and QINCo can be recovered as constrained special cases of \ours.
\end{theorem}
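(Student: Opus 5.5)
The plan is to exhibit, for each target method, an explicit choice of \ours's hyperparameters and parameters (the ``constraints'') under which the encoder (the index selection rule) and the decoder (the reconstruction map from indices) coincide with those of the target. Since both RQ and QINCo select indices by the same nearest-neighbour rule, Eq.~\eqref{eq:get_id}, applied to a step-$m$ codebook, it suffices to show two things for every $m$ and every index prefix $(i^1,\dots,i^{m-1})$. First, the dynamic codebook $\tilde{\mathcal{C}}^m$ of \ours equals the target's step-$m$ codebook. Second, the reconstruction is the sum of the selected codewords. Equal codebooks give equal argmins, and by induction on $m$ they give identical residuals, indices and outputs.

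\textbf{RQ as a special case.} I would take $N=1$, so the softmax gate is identically $1$, and choose any $D_e$, for instance setting the expert codebooks $\mathbf{E}^m$ to zero. I would then make the expert output vanish. The residual MLP has the form $\mathbf{h}_L=\mathbf{h}_0+\sum_l \mathrm{MLP}_l(\cdot)$, so setting $\mathrm{MLP}_l\equiv 0$ gives $\mathcal{E}(\mathbf{z})=\mathbf{z}$. Setting the projection $\mathrm{Linear}([\mathbf{c};\mathbf{I}])$ to the zero map then gives $\Delta\mathbf{c}_k^m=0$. (Alternatively, keep the identity on $\mathbf{c}$ and subtract it through a final zero-initialised block.) Hence $\tilde{\mathbf{c}}_k^m=\mathbf{c}_k^m$ for all $m,k$, independently of $\mathbf{I}^m$. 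This matches $\tilde{\mathcal{C}}^1=\mathcal{C}^1$ at the first step, and the decoder $\hat{\mathbf{x}}=\sum_m\tilde{\mathbf{c}}^m_{i^m}=\sum_m\mathbf{c}^m_{i^m}$ is exactly RQ. This part is routine.

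\textbf{QINCo as a special case.} QINCo builds $\tilde{\mathbf{c}}_k^m=\mathbf{c}_k^m+f_\theta^m([\mathbf{c}_k^m;\hat{\mathbf{x}}^m])$. Here $f_\theta^m$ consists of a linear projection of the concatenation to $\mathbb{R}^D$ followed by $L$ bottleneck residual blocks of width $H$, and $\hat{\mathbf{x}}^m=\sum_{p<m}\tilde{\mathbf{c}}^p_{i^p}$. I would again take $N=1$, so the gate is trivially $1$. I would identify $\mathrm{Linear}$ in $f_t$ with QINCo's concatenation projection and the single expert $\mathcal{E}^m_1$ with QINCo's $L$ residual blocks; the architectures match layer by layer. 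I would also set $D_e=D$. What remains is to force $\mathbf{I}^m=\hat{\mathbf{x}}^m$, and I would do this by induction on $m$. For $m=1$, both $\mathbf{I}^1=\mathbf{0}$ and $\hat{\mathbf{x}}^1=\mathbf{0}$. For the inductive step, if $\mathbf{I}^{m-1}=\hat{\mathbf{x}}^{m-1}$, then choosing the expert component activated by $i^{m-1}$ to equal the selected dynamic codeword $\tilde{\mathbf{c}}^{m-1}_{i^{m-1}}$ makes Eq.~\eqref{eq:instruction} reproduce $\hat{\mathbf{x}}^m=\hat{\mathbf{x}}^{m-1}+\tilde{\mathbf{c}}^{m-1}_{i^{m-1}}$. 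The codebooks then agree, and so do the indices and the decoder.

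\textbf{Main obstacle.} The difficulty is the last step. In \ours the expert component $\mathbf{e}^{m-1}_{i^{m-1}}$ is a \emph{static} table entry, whereas $\tilde{\mathbf{c}}^{m-1}_{i^{m-1}}$ depends on the whole prefix. A purely additive instruction therefore cannot equal $\hat{\mathbf{x}}^m$ in general. I would first try an injective encoding of the prefix. This means taking $D_e=(M-1)K$ with one-hot blocks $\mathbf{e}_k^m$, so that $\mathbf{I}^m$ determines $(i^1,\dots,i^{m-1})$. Because the set of prefixes is finite, one could then argue that the map from prefix to $\hat{\mathbf{x}}^m$ can be represented. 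However, the linear projection into $\mathbb{R}^D$ may lose this information before the nonlinearity acts, so exactness is doubtful. If that fails, I would state the constraint explicitly: the instruction update is specialised to $f_{ins}(\mathbf{I}^{m-1},i^{m-1})=\mathbf{I}^{m-1}+\tilde{\mathbf{c}}^{m-1}_{i^{m-1}}$, i.e.\ the expert stream is tied to the quantization stream. Under this constraint QINCo is recovered exactly, and the specialisation also shows that QINCo's sequential decoding arises precisely from that tying, which is the constraint \ours removes.
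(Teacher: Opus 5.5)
Your argument is correct and is essentially the paper's proof. The paper's Case~1 nullifies the instruction stream ($\mathbf{e}_k^m=\mathbf{0}$, hence $\mathbf{I}^m\equiv\mathbf{0}$) and asserts, rather than derives, that $f_t(\mathbf{c},\mathbf{0})=\mathbf{c}$; your zeroing of the expert output is what actually secures that identity. The paper's Case~2 sets $\mathbf{e}_k^m\equiv\mathbf{c}_k^m$ and $f_t\equiv f_\theta$ wholesale, with no layer-by-layer matching, and telescopes $\mathbf{I}^m=\mathbf{I}^{m-1}+f_t(\mathbf{c}^{m-1}_{i^{m-1}},\mathbf{I}^{m-1})=\hat{\mathbf{x}}^m$. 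That step does not follow from the additive rule; it is exactly your fallback constraint $f_{ins}(\mathbf{I}^{m-1},\cdot)=\mathbf{I}^{m-1}+\tilde{\mathbf{c}}^{m-1}_{i^{m-1}}$, which with $\mathbf{e}\equiv\mathbf{c}$ remains a function of $(\mathbf{I}^{m-1},\mathbf{E}^{m-1}_{i^{m-1}})$, and the paper's conclusion lists $f_{ins}$ among the constrained components. So your diagnosis of the obstacle is right, and the one-hot detour can be dropped.
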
 
\begin{theorem}
\label{thm:dimension}
For any target space $\mathbb{R}^D$ in \ours, setting the expert dimensionality to $D_e = D$ is sufficient to preserve the information required for contextual manifold-aware codebook adaptation.
\end{theorem}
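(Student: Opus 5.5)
The plan is to make ``sufficient to preserve the information'' precise as a conditional-entropy statement, in the framework set up just before Theorem~\ref{thm:unification}. The relevant context at step $m$ is whatever the history $(i^1,\dots,i^{m-1})$ tells us about the residual $\mathbf{r}^m$. Since $\mathbf{r}^m=\mathbf{x}-\hat{\mathbf{x}}^m$, the history acts on $\mathbf{r}^m$, and hence on $i^m$, only through the partial reconstruction $\hat{\mathbf{x}}^m\in\mathbb{R}^D$. This is the Markov observation used for RQ, now applied to the dynamic case. So $\hat{\mathbf{x}}^m$, or any statistic equivalent to it, is a sufficient statistic for the adaptation. The target claim is therefore that some configuration with $D_e=D$ satisfies
\begin{equation*}
H(i^m\mid \mathbf{I}^m,\mathbf{r}^m)=H(i^m\mid \hat{\mathbf{x}}^m,\mathbf{r}^m)=\min_{\text{context}} H(i^m\mid \text{context},\mathbf{r}^m),
\end{equation*}
so that no larger $D_e$ can reduce the conditional entropy further.

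\textbf{Step 1: an explicit $D_e=D$ construction.} I would take the expert codebook equal to the base codebook, $\mathbf{e}_k^m=\mathbf{c}_k^m$. Then Eq.~\eqref{eq:instruction} telescopes to $\mathbf{I}^m=\sum_{p<m}\mathbf{c}^p_{i^p}$, a point of $\mathbb{R}^D$. This is exactly the kind of conditioning vector QINCo feeds its residual MLP, and it is the construction I expect the proof of Theorem~\ref{thm:unification} to use. The input to $f_t$ is then $\mathrm{Linear}([\mathbf{c}_k^m;\mathbf{I}^m])$, which with $D_e=D$ has the same form and dimension as QINCo's concatenated input.

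\textbf{Step 2: injectivity and entropy preservation.} I would show that, for generic codebooks, the map from index prefixes to $\mathbf{I}^m$ is injective. There are finitely many prefixes, and coincidences of two distinct sums lie on a finite union of proper linear subspaces of codebook space, which has measure zero. Injectivity makes $\mathbf{I}^m$ an invertible function of the history. By the data processing inequality, conditioning on $\mathbf{I}^m$ then carries the same information about $i^m$ as conditioning on $(i^1,\dots,i^{m-1})$, and in particular as $\hat{\mathbf{x}}^m$.

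\textbf{Step 3: upper side.} For the converse I would argue that any instruction $\mathbf{I}^m\in\mathbb{R}^{D_e}$ with $D_e>D$ is itself a function of the history. Its information about $i^m$ given $\mathbf{r}^m$ is therefore bounded by that of the sufficient statistic, which already lives in $\mathbb{R}^D$ and is attained in Step 2. I would also note that the first linear layer of $f_t$ maps $[\mathbf{c}_k^m;\mathbf{I}^m]$ into $\mathbb{R}^D$, so the instruction contributes at most rank $D$ to $\mathbf{z}_k^m$ in any case. Extra dimensions are thus redundant, while $D_e<D$ can lose rank under this linear map when the context must track a $D$-dimensional reconstruction smoothly.

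\textbf{Main obstacle.} The hard part is Step 3's claim that $D$ dimensions are needed in a meaningful sense and not just sufficient. Pure injectivity holds generically even for $D_e=1$, so the information-theoretic argument alone does not single out $D$. I would try to resolve this by requiring the instruction to encode $\hat{\mathbf{x}}^m$ \emph{linearly}, i.e.\ through an affine, manifold-respecting embedding as in Step 1, rather than through an arbitrary injective code. Under that requirement, rank considerations give exactly $D_e=D$ as sufficient, with $D_e<D$ generally insufficient. I would present this linear-encoding requirement as the formal reading of ``contextual manifold-aware'' in the statement.
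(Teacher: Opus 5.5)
You have a genuine gap. Your Steps~1--2 do show that $\mathbf{I}^m$ generically determines the whole index prefix. But, as you note yourself, this holds for any $D_e\ge 1$. Moreover, the encoder is deterministic, so $i^m$ is a function of $(\mathbf{r}^m,\mathbf{I}^m)$ and every conditional entropy in your target equation is zero.

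Everything therefore rests on your linear-encoding reading, and that is where the plan fails: Step~1 does not give an affine encoding of $\hat{\mathbf{x}}^m$. With $\mathbf{e}^p_k=\mathbf{c}^p_k$ and the additive update of Eq.~\eqref{eq:instruction}, you get $\mathbf{I}^m=\sum_{p<m}\mathbf{c}^p_{i^p}$. By contrast, $\hat{\mathbf{x}}^m=\sum_{p<m}\tilde{\mathbf{c}}^p_{i^p}=\mathbf{I}^m+\sum_{2\le p<m}\Delta\mathbf{c}^p_{i^p}$, and each deformation depends nonlinearly on the prefix through $\mathbf{I}^p$.

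No other choice of $\mathbf{E}$ or $D_e$ repairs this. Under Eq.~\eqref{eq:instruction}, $\mathbf{I}^m$ and all its affine images are sums of single-index lookups. Yet already $\hat{\mathbf{x}}^3$ contains $\Delta\mathbf{c}^2_{i^2}$, a nonlinear function of $(\mathbf{c}^2_{i^2},\mathbf{e}^1_{i^1})$ that is generically not of the form $g(i^1)+h(i^2)$. So with nontrivial deformations, no RQ-MoE configuration in any dimension recovers $\hat{\mathbf{x}}^m$ affinely from $\mathbf{I}^m$. Your reading makes the statement unattainable rather than singling out $D$.

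The slip is identifying $\sum_{p<m}\mathbf{c}^p_{i^p}$ with QINCo's conditioning vector, which sums the \emph{adapted} codewords. The proof of Theorem~\ref{thm:unification} reaches $\mathbf{I}^m=\hat{\mathbf{x}}^m$ only by also changing $f_{ins}$ to add $f_t(\cdot)$, i.e.\ by re-coupling the streams. Note also that $\mathbf{e}=\mathbf{c}$ is exactly the ``w/o $E$'' ablation, which the paper reports as an information bottleneck. Generic injectivity is therefore not the notion of preserved information the statement intends.

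The rank remark in Step~3 has a separate weakness: it only bounds what a single step reads. If the projection to $\mathbb{R}^D$ is step-specific, different steps can read different $D$-dimensional projections of $\mathbf{I}^m$, so extra dimensions are not automatically redundant.

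The paper never builds an explicit instruction. It assumes $\mathbf{r}^m$ lies on a manifold $\mathcal{M}$ of intrinsic dimension $d\le D$. It then compares the metric entropy $H_\epsilon(\mathcal{M})\approx d\log(1/\epsilon)$ with the capacity $D_e\log(1/\epsilon)$ of an instruction resolved at the same precision. This gives the deficit ratio $D_e/d<1$ when $D_e<d$, and sufficiency follows from $D\ge d$. The regularity that rules out your degenerate injective codes is built into that resolution-$\epsilon$ dimension count. The ambient bound $d\le D$ is what selects $D$ without knowing $d$.

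If you prefer a rank argument, choose as the linearly encoded target something the additive stream can actually carry, such as the static coarse location $\sum_{p<m}\mathbf{c}^p_{i^p}$. State this as a modeling assumption, rather than claiming it is the sufficient statistic $\hat{\mathbf{x}}^m$.
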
 
Theorem~\ref{thm:unification} indicates that \ours is a generalized framework for the dynamic-codebook VQ method, and Theorem~\ref{thm:dimension} provides a guideline to set the dimensionality of experts in \ours, i.e., $D_e\le D$.
\section{Experiments}
\label{experiments}

\subsection{Experimental Setup}
\noindent\textbf{Datasets.} 
Following prior works~\cite{qinco}, we conduct experiments on four large-scale benchmarks containing data from different modalities: Deep1B ($D=96$)~\cite{deep1b}, BigANN ($D=128$)~\cite{bigann}, Facebook SimSearchNet++ (FB-ssnpp; $D=256$)~\cite{fbssnpp}, and Contriever ($D=768$)~\cite{qinco}.
Note that we conduct data selection on each benchmark. 
Hence, their names may be slightly changed according to the data volume, e.g., Deep1M means 1M data in Deep1B. 
Details are in Appendix~\ref{app:settings}.

\noindent\textbf{Baselines.} 
We compare \ours against diverse baselines. 
For traditional quantization methods, we use OPQ~\cite{opq}, RQ~\cite{rq}, and LSQ~\cite{lsq} as baselines. 
To evaluate neural-based quantization approaches, we consider UNQ~\cite{unq} and QINCo~\cite{qinco} as representative baselines. 
Following prior work~\cite{qinco,qinco2}, we report the published UNQ results due to the substantial computational overhead associated with its negative mining strategy. 
QINCo serves as the primary state-of-the-art dynamic-codebook baseline in our comparisons.

\noindent\textbf{Metrics.} Performance is evaluated using Mean Squared Error (MSE) for reconstruction fidelity and Recall@$k$ ($k \in \{1, 10, 100\}$) for retrieval accuracy, where Recall@$k$ measures whether the true nearest neighbor is retrieved within the top-$k$ results.

\noindent\textbf{Implementation Details} 
For traditional quantization methods OPQ, RQ and LSQ, we utilize the high-performance implementations provided by the FAISS library~\cite{FAISS}. 
For neural-based methods, we follow the experimental settings described in their respective original papers. 
For \ours, we set the number of second-level experts to N=1 in the main comparison to isolate the effect of the dual-stream decoupling mechanism without introducing additional model complexity.
Across all methods, a single codebook with a size of $K=256$ is employed. 
We evaluate performance under two quantization budgets 8 bytes and 16 bytes, and conduct training on two distinct scales: 500K and 10M vectors. 
More detailed are provided in Appendix~\ref{app:settings}.

\subsection{Overall Results}
\begin{table}[t]
\centering
\caption{Ablation study on different training scales (500K and 10M). The MSE values for BigANN and FB-ssnpp are reported in units of $10^4$. We evaluate the impact of the NRL (w/o NRL) and the Expert Codebook (w/o $E$). }
\label{tab:ablation}
\small
\setlength{\tabcolsep}{3.5pt}

\begin{tabular}{ll cccccc} 
\toprule
& & \multicolumn{2}{c}{\textbf{Deep1M}} & \multicolumn{2}{c}{\textbf{BigANN1M}} & \multicolumn{2}{c}{\textbf{FB-ssnpp1M}} \\
\cmidrule(lr){3-4} \cmidrule(lr){5-6} \cmidrule(lr){7-8}
& Method & MSE & R@1 & MSE & R@1 & MSE & R@1 \\
\midrule
\multirow{3}{*}{\textbf{500K}} 
& \ours     & 0.14  & 31.6  & 1.29  & 42.3  & 8.85  & 3.4 \\
& w/o NRL   & 0.15  & 29.2  & 1.54  & 34.7  & 9.03  & 2.9 \\
& w/o $E$   & 0.15  & 29.6  & 1.39  & 39.0  & 9.01  & 2.9 \\
\midrule
\multirow{3}{*}{\textbf{10M}} 
& \ours     & 0.12  & 36.5  & 1.10  & 46.2  & 8.64   & 3.6 \\
& w/o NRL   & 0.14  & 30.7  & 1.26  & 43.4  & 8.86   & 3.4 \\
& w/o $E$   & 0.14  & 31.5  & 1.17  & 44.2  & 8.78   & 3.5 \\
\bottomrule
\end{tabular}
\vspace{-10pt}
\end{table}

As presented in Tab.~\ref{tab:results_10M}, \ours achieves leading performance across all metrics and datasets, confirming the effectiveness of our design. 
Crucially, \ours consistently matches or outperforms the strongest baseline QINCo, which is dynamic-codebook but coupled-state method.  
This empirically validates that a decoupled instruction stream effectively encapsulates the necessary manifold information for conditioning, retaining essential geometric details without the constraint of full reconstruction path dependency.
Beyond accuracy, \emph{the robust performance across diverse modalities underscores the versatility of \ours}.
For detailed results on the 500K training scale and further supplementary experiments, please refer to Appendix~\ref{app:main_results}.

\subsection{Ablation Study}

We evaluate each core component by comparing \ours against two variants: 
(1) w/o NRL, which replaces NRL with per-step MSE; and 
(2) w/o $E$, which removes the expert dimensionality and instead reuses the base codebook $C$ to form the conditioning signal (i.e., duplicate $\mathbf{c}$ to replace $\mathbf{e}$ in Eq.~\ref{eq:hcode}). 
Results in Tab.~\ref{tab:ablation} demonstrate that removing NRL leads to a consistent performance degradation, confirming that standard MSE provides insufficient supervision for deep expert networks to capture fine-grained residual geometry. 
Similarly, the w/o $E$ variant suffers from an information bottleneck; without the independent expert signal, the quantizer fails to adapt dynamically to complex manifold variations, showing the effectiveness of the expert signal. 
Moreover, the result when $N=1$ (i.e., no expert) in the experiments reported in Sec.~\ref{sec:parameter} also proves that experts indeed improve the reconstruction accuracy. 

\subsection{Analysis of Efficiency}
\begin{table}[t]
\centering
\caption{Average encoding and decoding latency per vector (in $\mu$s) on NVIDIA A800 GPUs. For UNQ, $b$ and $H'$ denote codeword dimensionality and hidden layer dimensionality, respectively. For all methods, $M=8, K=256, D=128, H=256$.}

\label{tab:efficiency}
\small

\begin{tabular}{lccc}
\toprule
Method     & Setting & Encoding & Decoding \\
\midrule
UNQ     & $H'=1024, b=256$   &  2.3  & 1.5  \\
QINCo   & $L=4$             & 91.8  & 3.3  \\
\midrule
\multirow{3}{*}{\ours} 
        & $N=1, L=4$        & 96.2  & 1.1 \\
        & $N=2, L=2$        & 75.4  & 0.7 \\
        & $N=4, L=1$        & 67.9  & 0.5 \\
\bottomrule
\end{tabular}
\vspace{-10pt}
\end{table}

Tab.~\ref{tab:efficiency} demonstrates the efficiency of neural methods. 
We can see that, \emph{while complexity is comparable to QINCo, \ours significantly accelerates decoding}. 
At $N=4, L=1$, \ours achieves a 0.5~$\mu$s latency, which is 6.6$\times$ faster than QINCo. 
This speedup is achieved through parallelism at inter-step parallelism via a fast path, and intra-step parallelism within $f_t$ (Eq.~\ref{eq:ft}). 
Notably, increasing $N$ also leads to a substantial efficiency gain on the encoding side. 
As $M$ increases to 16, the speedup ratio further expands to over 14$\times$; more details are provided in Appendix~\ref{app:efficiency}.

\noindent\textbf{Comparison to QINCo2.}
We notice that the recent work QINCo2~\cite{qinco2} achieves acceleration primarily through techniques such as codeword pre-selection. 
The design of \ours is orthogonal to QINCo2 and can be naturally combined to further enhance performance.\footnote{Experimental comparisons are provided in Appendix~\ref{app:qinco2_compare}.}

\subsection{Analysis of Stream Perturbation}

\begin{figure*}[t]
  \begin{center}
    \centerline{\includegraphics[width=\textwidth]{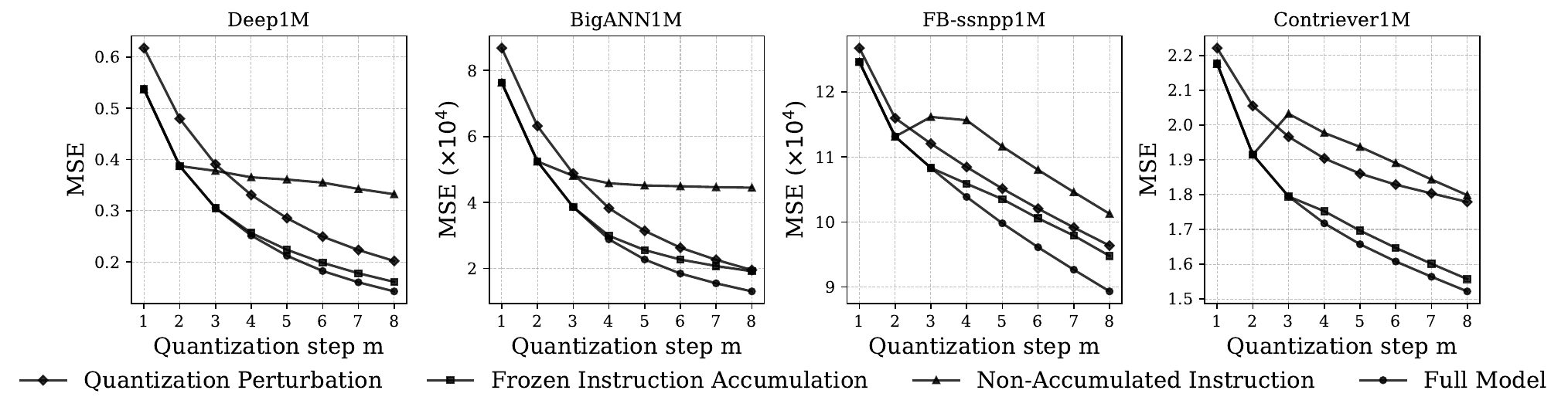}}
    \caption{Step-wise reconstruction errors under different perturbation strategies on four datasets. }
    \label{fig:perturb}
  \end{center}
\vspace{-15pt}
\end{figure*}

To verify that the quantization and instruction streams play distinct functional roles, we conduct causal perturbation experiments during inference, perturbing one stream at a time while keeping all parameters fixed. 

We consider three perturbation strategies:
(i) Quantization Perturbation: Replace each selected codeword with the second nearest entry.  
(ii) Frozen Instruction Accumulation: Freeze the instruction vector after the first three steps.  
(iii) Non-Accumulated Instruction: Use only the current expert embedding without historical accumulation.  

As shown in Fig.~\ref{fig:perturb}, perturbing either stream consistently increases reconstruction distortion. 
The degradation patterns differ: perturbations on the quantization stream cause immediate errors, while perturbations on the instruction stream accumulate over steps, producing more severe errors, particularly when historical accumulation is removed. 
These results confirm that the two streams serve distinct functional roles and are not interchangeable.

\subsection{Analysis of Parameter Sensitivity}
\label{sec:parameter}
\begin{figure}[t]
  \begin{center}
    \centerline{\includegraphics[width=\columnwidth]{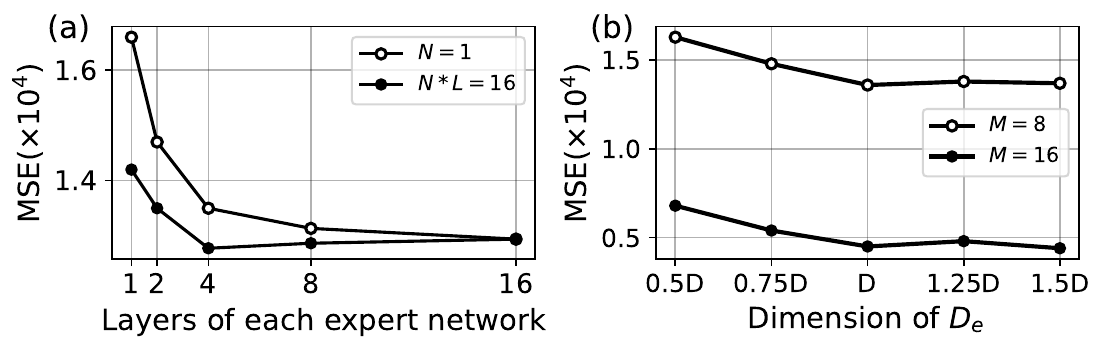}}
    \caption{
     Sensitivity of MSE on BigANN1M. (a) Impact of the number of experts $N$ and network depth $L$. (b) Impact of the expert codebook dimension $D_e$ $(N=2, L=4)$.
    }
  \label{fig:ablation1}
  \end{center}
\vspace{-20pt}
\end{figure}

In Fig.~\ref{fig:ablation1}, we assess the impact of $N$, $L$, and $D_e$ on reconstruction performance. 
When the product $N \times L$ is fixed, an excessively large $N$ limits the depth of each expert, while a very small $N$ lacks representation breadth; an optimal $N$ balances depth and breadth to achieve the best accuracy.\footnote{The complete Pareto frontier is provided in Appendix~\ref{app:pareto}.} 
With $N$ fixed, increasing the network depth $L$ consistently yields performance gains. 
Furthermore, quantization precision improves as the expert dimensionality $D_e$ increases, but plateaus once $D_e$ exceeds the input dimension $D$. 
This observation aligns with the theoretical analysis in Sec.~\ref{sec:Theorem}.

\subsection{Analysis of Dynamic Rates}

We evaluate whether a single \ours model trained for long codes can effectively generate short codes via early termination. Fig.~\ref{fig:ablation2} shows MSE per quantization step on BigANN1M for the 16-byte model ($M=16$), which remains nearly identical to models trained specifically for shorter horizons ($m \le 8$). 
This indicates that NRL optimizes deep steps while maintaining the residual hierarchy, preventing subsequent refinements from interfering with earlier parameters.
Consequently, this characteristic yields significant practical benefits: it enables compressed domain rate adjustment by simply cropping codes, amortizes training costs by optimizing only for the maximum length $M$, and simplifies model management by maintaining a single master model.
Appendix~\ref{app:rates} provides analysis for other datasets.

\subsection{Analysis of Expert Codebook Gradients}

The normalized residual loss computes each step's loss relative to the previous residual, ensuring that contributions from all quantization steps remain on a comparable scale.
To investigate its effect on deep quantization layers, we analyze the relative gradient magnitudes of expert codebooks at each quantization step, averaged over training iterations.

Tab.~\ref{tab:gradient} presents the results. Gradients under NRL are consistently higher than those under standard MSE across deeper layers, indicating that NRL provides stronger training signals for later-stage codebooks. 
In contrast, MSE exhibits rapid gradient decay, which can leave deeper quantization layers under-optimized. 
Overall, the more gradual decline of gradients under NRL supports more effective training of deep expert codebooks and contributes to improved reconstruction performance.

\begin{figure}[t]
  \begin{center}
    \centerline{\includegraphics[width=\columnwidth]{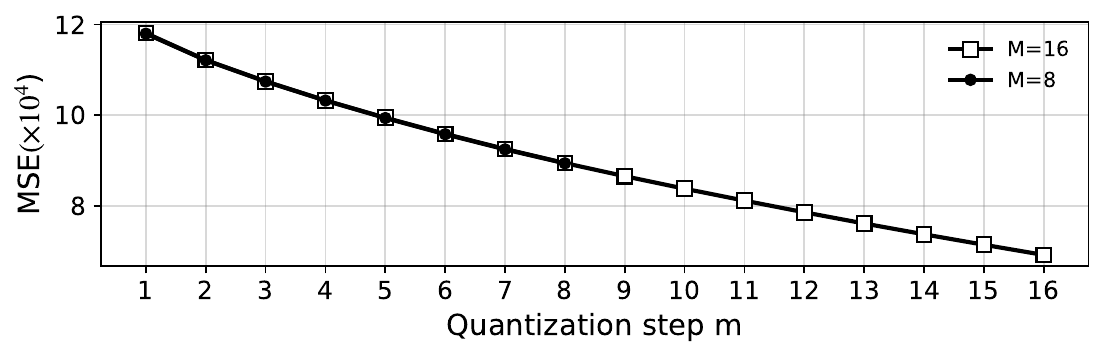}}
    \caption{
     MSE for \ours trained for 8-byte and 16-byte encodings on FB-ssnpp1M, truncated at a varying number of bytes.
    }
  \label{fig:ablation2}
  \end{center}
\vspace{-5pt}
\end{figure}

\begin{table}[t]
\centering
\caption{Relative gradient magnitudes of expert codebooks at each quantization step. Values are averaged over training iterations and normalized by the gradient of the first layer.}
\label{tab:gradient}
\small
\setlength{\tabcolsep}{3.5pt}

\begin{tabular}{c cccccc} 
\toprule
& \multicolumn{6}{c}{Quantization Step} \\
\cmidrule(lr){2-7}
Loss Type & 2 & 3 & 4 & 5 & 6 & 7 \\
\midrule
MSE & 0.812 & 0.568 & 0.212 & 0.073 & 0.017 & 0.008\\
NRL & 0.837 & 0.613 & 0.382 & 0.247 & 0.108 & 0.042\\
\bottomrule
\end{tabular}
\end{table}

\section{Conclusion}
\label{sec:conclusion}

In this paper, we present \ours, a framework combining a two-level MoE with dual-stream quantization to enable input-dependent codebook adaptation for efficient vector quantization. 
By leveraging a two-level MoE mechanism enabled by implicit routing and NRL, \ours achieves a substantial acceleration in decoding throughput ($6\times$–$14\times$ faster) while maintaining superior reconstruction fidelity across diverse datasets.
Theoretically, we establish \ours as a generalized framework that unifies existing RQ methods as degenerate cases, and derive a guideline for setting expert dimensionality in \ours. 
In the future, we plan to explore how to better adapt \ours to different applications, e.g., recommender systems and image generation.

\clearpage
\section*{Impact Statement}
\label{impact_statement}

This paper presents work whose goal is to advance the field of machine learning. There are many potential societal consequences of our work, none of which we feel must be specifically highlighted here.

\bibliography{icml2026}
\bibliographystyle{icml2026}

\newpage
\appendix
\onecolumn

\section{Proof for Theorems}
\subsection{Proof for Theorem~\ref{thm:unification}}
\label{pf:1}

\textbf{Theorem \ref{thm:unification}.} \textit{Standard RQ and QINCo can be recovered as constrained special cases of \ours.}

\begin{proof}
RQ-based quantization methods satisfy a Markov property. For \ours at step $m$, let the state be $\mathbf{s}^m = (\mathbf{r}^m, \mathbf{I}^{m})$, where $\mathbf{r}^m$ is the residual and $\mathbf{I}^{m}$ is the accumulated instruction. The action $a^m$ corresponds to selecting an index $i^m$. The policy of \ours is defined as the conditional probability of selecting an index given the current joint state:
\begin{align}
    \label{eq:moe_policy}
    P_{\text{\ours}}(i^{m}|i^{m-1}, \dots, i^{1}) 
    &= P_{\text{\ours}}(i^{m}|\mathbf{r}^m, \mathbf{E}_{i^{m-1}}^{m-1}, \dots, \mathbf{E}_{i^{1}}^{1}) \\
    \label{eq:fins}
    &= P_{\text{\ours}}(i^{m} \mid \mathbf{r}^m, f_{ins}(\mathbf{I}^{m-1}, \mathbf{E}_{i^{m-1}}^{m-1})) \\
    &= P_{\text{\ours}}(i^{m} \mid \mathbf{r}^m, \mathbf{I}^m).
\end{align}
The state transition for the instruction stream is $\mathbf{I}^{m} = f_{ins}(\mathbf{I}^{m-1}, \mathbf{E}_{i^{m-1}}^{m-1})$, and the residual transition is $\mathbf{r}^{m+1} = \mathbf{r}^m - f_t(\mathbf{c}_{i^m}^m, \mathbf{I}^m)$. We now demonstrate that imposing specific constraints on the instruction space $\mathcal{I}$ and the expert mapping recovers the policies of Standard RQ and QINCo.

\textbf{Case 1: Recovery of Standard RQ (Memoryless Policy)} \\
Standard RQ utilizes static codebooks independent of the quantization history. This corresponds to the case where the instruction stream is nullified.
Let the expert dimension $D_e = 0$ (or equivalently, $\mathbf{e}_k^m = \mathbf{0}$ for all $k, m$). Under this constraint, the instruction vector becomes a constant zero vector $\mathbf{I}^m = \mathbf{0}, \forall m$. Consequently, the transformation function becomes an identity mapping $f_t(\mathbf{c}, \mathbf{0}) = \mathbf{c}$.
The policy in Eq.~\ref{eq:moe_policy} degenerates to:
\begin{align}
    P_{\text{\ours}}(i^{m}|i^{m-1}, \dots, i^{1}) 
    &= P_{\text{\ours}}(i^{m}|\mathbf{r}^m, \mathbf{0}) \\
    &\equiv P_{\text{RQ}}(i^{m} \mid \mathbf{r}^m).
\end{align}
For RQ, since the $f_t$ is a identity mapping, the residual transition becomes $\mathbf{r}^{m+1} = \mathbf{r}^m - \mathbf{c}_{i^m}^m$. This recovers the memoryless policy of Standard RQ, where the action depends solely on the current residual.

\textbf{Case 2: Recovery of QINCo (Coupled-State Policy)} \\
QINCo conditions codebook generation on the cumulative reconstruction value $\hat{\mathbf{x}}^{m}$, representing a strict coupling in our framework. By setting the expert component identical to the base codeword ($\mathbf{e}_k^m \equiv \mathbf{c}_k^m$ for all $k, m$) and defining the transformation function as the deep residual MLP $f_\theta$ used in QINCo ($f_t \equiv f_\theta$), the update rule simplifies to:
\begin{align}
    \mathbf{I}^{m} 
    &= f_{ins}(\mathbf{I}^{m-1}, \mathbf{E}_{i^{m-1}}^{m-1}) \\
    &= \mathbf{I}^{m-1} + f_t(\mathbf{c}_{i^m}^m, \mathbf{I}^{m-1})\\
    &= \mathbf{I}^{m-1} + \tilde{\mathbf{c}}^m_{i^m} \\
    &= \hat{\mathbf{x}}^m.
\end{align}
Substituting this into Eq.~\ref{eq:fins}, the policy becomes:
\begin{align}
    P_{\text{\ours}}(i^{m}|i^{m-1}, \dots, i^{1}) 
    &= P_{\text{\ours}}(i^{m} \mid \mathbf{r}^m, f_{ins}(\mathbf{I}^{m-1}, \mathbf{E}_{i^{m-1}}^{m-1})) \\
    &\equiv P_{\text{QINCo}}(i^{m} \mid \mathbf{r}^m, \hat{\mathbf{x}}^{m}) \\
    &= P_{\text{QINCo}}(i^{m} \mid \mathbf{r}^m).
\end{align}
Here, the latent control signal $\mathbf{I}^m$ collapses into the reconstruction value $\hat{\mathbf{x}}^{m}$. Thus, QINCo is identified as a degenerate case where the instruction stream is numerically coupled to the quantization stream.

\textbf{Conclusion.} Since both the memoryless policy of Standard RQ and the coupled-state policy of QINCo are derived by placing constraints on $f_t$, $f_{ins}$ and expert codebook $E$, \ours constitutes a generalized framework for RQ-based quantization.
\end{proof}

\subsection{Proof for Theorem~\ref{thm:dimension}}
\label{pf:2}

\textbf{Theorem \ref{thm:dimension}.} \textit{For any target space $\mathbb{R}^D$ in \ours, setting the expert dimensionality to $D_e = D$ is sufficient to preserve the information required for contextual manifold-aware codebook adaptation.} 
\begin{proof}
Let the residual $\mathbf{r}^m \in \mathbb{R}^D$ be supported on a smooth manifold $\mathcal{M}$ with intrinsic dimension $d \le D$. Since quantization is inherently a finite-precision operation, we analyze the system using the concept of \emph{Metric Entropy}. For a given precision $\epsilon > 0$, let $N_\epsilon(\mathcal{M})$ denote the minimum number of balls of radius $\epsilon$ required to cover $\mathcal{M}$. The entropy at resolution $\epsilon$ is given by $H_\epsilon(\mathcal{M}) = \log N_\epsilon(\mathcal{M})$. For small $\epsilon$, this scales as:
\begin{equation}
    H_\epsilon(\mathcal{M}) \approx d \cdot \log(1/\epsilon) + C,
\end{equation}
where $d$ is the intrinsic dimension.

\paragraph{Ideal Capacity Formulation.}
We model the instruction vector $\mathbf{I}^m \in \mathbb{R}^{D_e}$ as a latent code that must transmit the location of $\mathbf{r}^m$ on the manifold to the codebook generator. Using rate-distortion theory as a conceptual reference, we note that achieving reconstruction error on the order of $\epsilon$ requires the instruction representation to carry information that scales with the metric entropy of the underlying manifold:
\begin{equation}
    I(\mathbf{r}^m; \mathbf{I}^m) \ge R(\epsilon) \approx H_\epsilon(\mathcal{M}).
\end{equation}

\paragraph{Bottleneck for $D_e < D$.}
Consider the capacity of the instruction space $\mathbb{R}^{D_e}$. The maximum entropy of the instruction vector constrained to a bounded volume at resolution $\epsilon$ scales as:
\begin{equation}
    H_\epsilon(\mathbf{I}^m) \approx D_e \cdot \log(1/\epsilon).
\end{equation}
If $D_e < d \le D$, comparing the scaling laws (slope of information growth) reveals an information deficit:
\begin{equation}
    \lim_{\epsilon \to 0} \frac{H_\epsilon(\mathbf{I}^m)}{H_\epsilon(\mathcal{M})} = \frac{D_e}{d} < 1.
\end{equation}
This implies that as the quantization precision requirements increase ($\epsilon \to 0$), the instruction stream $\mathbf{I}^m$ lacks the sufficient degrees of freedom to distinguish distinct points on $\mathcal{M}$. This \textit{dimensional bottleneck} prevents the instruction stream from faithfully encoding local manifold variations and empirically manifests as degraded reconstruction fidelity.

\paragraph{Sufficiency for $D_e = D$.}
Although the exact intrinsic dimension $d$ is latent and input-dependent (typically estimable via empirical ablation), $D$ serves as a guaranteed upper bound. When $D_e = D \ge d$, the instruction space possesses sufficient dimensional capacity to admit a locally expressive parametrization that preserves the essential degrees of freedom required for manifold-aware adaptation of the manifold $\mathcal{M}$. Consequently, the capacity of the instruction channel scales sufficiently with the source entropy:
\begin{equation}
    H_\epsilon(\mathbf{I}^m) \sim H_\epsilon(\mathcal{M}).
\end{equation}
Thus, setting $D_e = D$ ensures that the instruction vector is sufficient for controlling manifold-aware codebook adaptation in terms of information dimension, allowing the system to reduce distortion without being limited by the capacity of the instruction representation.

\paragraph{Conclusion.}
Therefore, $D_e = D$ constitutes a sufficient dimensionality for the instruction stream under finite-precision quantization. Increasing $D_e$ beyond $D$ adds redundant capacity that exceeds the intrinsic information growth rate of the source residual $\mathbf{r}^m$.
\end{proof}


\section{Detailed Experimental Settings}
\label{app:settings}

\subsection{Descriptions of Datasets}
\label{app:data}

To evaluate the performance of \ours across diverse data distributions and vector dimensions, we conduct experiments on four large-scale benchmarks:
\begin{enumerate}
    \item Deep1B ($D=96$)~\cite{deep1b} contains 1 billion image descriptors extracted using a pre-trained GoogLeNet model and is a standard benchmark for vector quantization. 

    \item BigANN ($D=128$)~\cite{bigann} is a classic benchmark consisting of 1 billion SIFT descriptors, representing handcrafted local features. 

    \item Facebook SimSearchNet++ (FB-ssnpp; $D=256$)~\cite{fbssnpp} comprises over 1 billion image embeddings optimized for image copy detection, reflecting the data distribution in modern web search infrastructures.

    \item Contriever ($D=768$)~\cite{qinco} consists of 21 million 100-token passage embeddings extracted from Wikipedia. It represents the high-dimensional latent space typical of modern NLP and dense passage retrieval tasks. 
\end{enumerate}

\subsection{Data Partitioning and Specifications}
For a consistent evaluation across all datasets, we adhere to a rigorous partitioning and configuration protocol:
\begin{itemize}
    \item \textbf{Data Selection}: For each dataset, we extract the first 500,000 (500K) and 10,000,000 (10M) vectors from the original training split for our training sets.
    \item \textbf{Validation Set}: We reserve the 10,000 vectors immediately following the training segment as a validation set to monitor performance and implement the early stopping criterion.
    \item \textbf{Test Set and Evaluation}: Both reconstruction error (e.g., Mean Squared Error) and retrieval performance (e.g., Recall@N) are evaluated on the first 1,000,000 (1M) vectors of the test set.
    \item \textbf{Codebook Configuration}: Across all methods and stages, the number of entries in each codebook is strictly controlled at $K=256$.
    \item \textbf{Dimensionality}: All quantization processes are conducted in the original feature space of the datasets without any dimensionality reduction such as PCA. For instance, we maintain the original 128-D for BigANN1M and 96-D for deep1M.
\end{itemize}

\subsection{Optimization and Convergence}
All neural models are implemented in PyTorch and optimized using the Adam optimizer. We maintain a constant learning rate of $1 \times 10^{-3}$ and a batch size of $1,024$. To ensure optimal generalization and avoid unnecessary computation, we utilize an early stopping mechanism: training is automatically terminated if the validation loss fails to decrease for 10 consecutive epochs. The model parameters associated with the lowest recorded validation loss are preserved for final evaluation.
Latency is measured using a batch size of 4,096 with 10 warm-up iterations and 100 synchronized runs.

\subsection{Computational Infrastructure}
The experiments are conducted on a high-performance server equipped with 4 $\times$ RTX 3090 GPUs. The host system is powered by a single Intel Xeon Gold 6330 CPU (2.00 GHz, 28 physical cores), which facilitates high-throughput data orchestration and parallel preprocessing.

\subsection{Training Strategy and Initialization}
To optimize training efficiency and accelerate model convergence, we perform preliminary training using RQ~\cite{rq} implementation provided by the FAISS library~\cite{FAISS}. The codebooks obtained from this stage serve as a robust initialization for our subsequent process.

\section{Supplementary Experiments}
\label{app:exp}

In this section, we present additional experimental results that were omitted from the main text due to the page limit. 

\subsection{Performance on Different Data Scales}
\label{app:main_results}
\begin{table*}[t]
\centering
\caption{Comparison of reconstruction error (MSE) and retrieval recall (R@$k$, \%) on four benchmark datasets using a training set of 500K and 10M samples. The best results in each category are highlighted in \textbf{bold}. We report the quantization fidelity (MSE) and retrieval accuracy (Recall@$K$) under codebook budgets of 8 and 16 bytes. Note that the MSE values for BigANN and FB-ssnpp are reported in units of $10^4$. For \ours, we employ $N=1$ and $L=16$ by default, and $L=12$ for the Contriever dataset to align with the QINCo configuration.}
\label{tab:results_all}
\small
\setlength{\tabcolsep}{3pt}
\begin{tabular}{l cccc cccc cccc cccc}
\toprule
 & \multicolumn{4}{c}{\textbf{Deep1M ($D=96$)}} & \multicolumn{4}{c}{\textbf{BigANN1M ($D=128$)}} & \multicolumn{4}{c}{\textbf{FB-ssnpp1M ($D=256$)}} & \multicolumn{4}{c}{\textbf{Contriever1M ($D=768$)}} \\
\cmidrule(lr){2-5} \cmidrule(lr){6-9} \cmidrule(lr){10-13} \cmidrule(lr){14-17}
Method & MSE & R@1 & R@10 & R@100 & MSE & R@1 & R@10 & R@100 & MSE & R@1 & R@10 & R@100 & MSE & R@1 & R@10 & R@100 \\
\midrule
 & \multicolumn{16}{c}{\textbf{8 bytes (500K)}} \\
\midrule
OPQ     
& 0.26 & 16.0 & 51.3 & 88.6 
& 2.95 & 21.7 & 64.3 & 95.2 
& 9.52 &  2.5 &  5.1 & 11.0 
& 1.87 &  8.1 & 24.5 & 50.9 \\
RQ      
& 0.20 & 21.3 & 63.6 & 95.2 
& 2.49 & 27.5 & 75.4 & 98.2 
& 9.20 &  2.7 &  6.1 & 13.7 
& 1.82 & 10.4 & 27.0 & 52.8 \\
        
LSQ  
& 0.17 & 24.5 & 69.4 & 96.9 
& 1.91 & 31.7 & 79.4 & 98.9 
& 8.87 &  3.3 &  7.5 & 17.3 
& 1.65 & 13.1 & 33.9 & 62.7 \\

UNQ   
& 0.16 & 26.7 & 72.6 & 97.3 
& 1.51 & 34.6 & 82.8 & 99.0 
&  --- &  --- &  --- &  ---  
&  --- &  --- &  --- &  --- \\

QINCo 
& 0.15 & 29.4 & 77.6 & 98.5 
& 1.40 & 39.5 & 86.4 & 99.3 
& 9.01 &  2.9 &  7.5 & 16.8 
& 1.60 & 15.1 & 36.7 & 63.5 \\

\ours 
& \textbf{0.14} & \textbf{31.6} & \textbf{79.2} & \textbf{98.9} 
& \textbf{1.29} & \textbf{42.3} & \textbf{91.1} & \textbf{99.7} 
& \textbf{8.85} & \textbf{ 3.4} & \textbf{ 7.9} & \textbf{17.6} 
& \textbf{1.52} & \textbf{16.0} & \textbf{41.3} & \textbf{67.4} \\

\midrule
 & \multicolumn{16}{c}{\textbf{16 bytes (500K)}} \\
\midrule

OPQ   
& 0.14 & 34.9 & 82.2 & 98.9 
& 1.79 & 40.5 & 89.9 & 99.8 
& 7.25 &  5.0 & 11.8 & 25.9 
& 1.71 & 18.3 & 40.9 & 65.4 \\

RQ    
& 0.10 & 43.0 & 90.8 & 99.8 
& 1.30 & 49.0 & 95.0 & 100.0 
& 7.01 &  5.4 & 13.0 & 29.0
& 1.65 & 20.2 & 43.5 & 68.2 \\

LSQ   
& 0.09 & 42.3 & 89.7 & 99.8 
& 0.98 & 51.1 & 95.4 & 100.0 
& \textbf{6.63} &  \textbf{6.2} & \textbf{14.8} & \textbf{32.3} 
& 1.35 & 25.6 & 53.8 & 78.6 \\

UNQ   
& 0.07 & 47.9 & 93.0 & 99.8 
& 0.57 & 59.3 & 98.0 & 100.0 
&  --- &  --- &  --- &  ---  
&  --- &  --- &  --- &  --- \\

QINCo 
& 0.06 & 53.0 & 96.2 & 100.0 
& 0.47 & 65.5 & 99.1 & 100.0 
& 6.88 &  5.7 & 14.4 & 31.6 
& 1.30 & 26.5 & 54.3 & 79.5 \\

\ours
& \textbf{0.06} & \textbf{53.8} & \textbf{96.5} & \textbf{100.0} 
& \textbf{0.44} & \textbf{66.4} & \textbf{99.3} & \textbf{100.0} 
&         6.82  &          6.0  & \textbf{14.8} &         32.1 
& \textbf{1.26} & \textbf{27.1} & \textbf{55.2} & \textbf{80.2} \\

\midrule
 & \multicolumn{16}{c}{\textbf{8 bytes (10M)}} \\
\midrule

OPQ     
& 0.25 & 15.2 & 51.2 & 87.8 
& 2.97 & 21.4 & 63.9 & 95.3 
& 9.51 &  2.5 &  5.0 & 11.3 
& 1.87 &  8.5 & 24.4 & 50.6 \\
RQ      
& 0.19 & 22.3 & 64.7 & 95.5 
& 2.49 & 27.8 & 75.2 & 98.1 
& 9.18 &  2.7 &  5.9 & 14.4 
& 1.81 &  9.8 & 27.4 & 53.1 \\
        
LSQ  
& 0.17 & 24.6 & 68.7 & 96.8 
& 1.89 & 30.9 & 78.5 & 98.8 
& 8.82 &  3.4 &  8.0 & 18.0 
& 1.64 & 13.3 & 35.1 & 62.9 \\

UNQ   
& 0.14 & 29.2 & 77.5 & 98.8 
& 1.12 & 39.7 & 88.3 & 99.6 
&  --- &  --- &  --- &  ---  
&  --- &  --- &  --- &  --- \\

QINCo 
& 0.12 & 36.3 & 84.6 & 99.4 
& 1.12 & 45.1 & 91.0 & 99.7 
& 8.67 &  3.6 &  8.9 & 20.7 
& 1.42 & 20.4 & 47.1 & 74.5 \\

\ours 
& \textbf{0.12} & \textbf{36.5} & \textbf{85.0} & \textbf{99.5} 
& \textbf{1.10} & \textbf{46.2} & \textbf{91.7} & \textbf{99.7} 
& \textbf{8.64} & \textbf{3.6}  & \textbf{9.0}  & \textbf{20.9} 
& \textbf{1.38} & \textbf{21.6} & \textbf{48.9} & \textbf{76.2} \\

\midrule
 & \multicolumn{16}{c}{\textbf{16 bytes (10M)}} \\
\midrule

OPQ   
& 0.14 & 34.9 & 82.2 & 98.9 
& 1.79 & 41.5 & 89.6 & 99.8 
& 7.25 &  5.1 & 12.3 & 27.3 
& 1.71 & 18.0 & 40.8 & 98.9 \\

RQ    
& 0.10 & 43.0 & 90.6 & 99.9 
& 1.30 & 49.2 & 95.1 & 100.0 
& 6.99 &  5.4 & 13.0 & 29.7
& 1.65 & 19.9 & 43.6 & 68.4 \\

LSQ   
& 0.09 & 42.0 & 89.5 & 99.9 
& 0.98 & 51.0 & 95.4 & 100.0 
& 6.56 &  6.3 & 16.0 & 34.7 
& 1.32 & 25.7 & 54.6 & 79.8 \\

UNQ   
& 0.06 & 51.5 & 95.8 & 100.0 
& 0.47 & 64.3 & 98.8 & 100.0 
&  --- &  --- &  --- &  ---  
&  --- &  --- &  --- &  --- \\

QINCo 
& 0.05 & 59.6 & 98.1 & 100.0 
& 0.33 & 71.6 & 99.5 & 100.0 
& 6.58 &  6.4 & 16.8 & 35.6 
& 1.10 & 31.0 & 61.7 & 85.4 \\

\ours
& \textbf{0.05} & \textbf{60.2} & \textbf{98.6} & \textbf{100.0} 
& \textbf{0.30} & \textbf{72.3} & \textbf{99.6} & \textbf{100.0} 
& \textbf{6.53} & \textbf{6.5}  & \textbf{17.0} & \textbf{36.1} 
& \textbf{1.08} & \textbf{31.4} & \textbf{62.5} & \textbf{87.2} \\

\bottomrule
\end{tabular}
\end{table*}
In this experiment, we provide a comprehensive summary of the reconstruction accuracy and retrieval performance under two data scales: 500K and 10M vectors. The evaluations are conducted across two quantization budgets, specifically 8-byte (8 layers) and 16-byte (16 layers) configurations. Throughout these trials, the hyperparameters are held constant at $H=256$, $N=1$, and $L=16$. Tab.~\ref{tab:results_all} summarizes the comparative results, demonstrating the robustness and scalability of our proposed method in terms of both quantization precision and search efficiency. The results presented here lead to several key observations.

\noindent\textbf{Efficacy of Dynamic Codebooks.}
As evidenced by the extensive experimental data, dynamic-codebook Vector Quantization (VQ) methods outperform static-codebook approaches across the vast majority of metrics. This superiority further demonstrates that the distribution of real-world data is seldom isotropic. By analyzing historical quantization information, dynamic codebooks effectively reshape the local manifolds formed by the codebook, leading to significantly more precise quantization boundaries that adapt to data density.

\noindent\textbf{Comparison with SOTA.}
Compared to the current state-of-the-art method, Qinco, \ours achieves superior or competitive performance across multiple datasets. These results suggest that local manifold information does not necessarily need to be strictly coupled with precise reconstruction vectors. Instead, the supplementary expert codebooks in our \text{RQ-MoE} framework can capture local manifold signals through a mechanism decoupled from the primary quantization stream. Furthermore, while maintaining high precision, the dual-stream architecture of \ours enables accelerated decoding, addressing a common bottleneck in neural-based quantizers.

\noindent\textbf{Performance on FB-ssnpp and Scalability.}
Notably, for the FB-ssnpp dataset trained on 500K vectors with $M=16$, LSQ exhibits leading performance. We attribute this to the beam search reconstruction strategy employed by LSQ and the inherent simplicity of its architecture, which renders it less susceptible to overfitting on smaller subsets. In contrast, complex neural-based quantization methods may suffer from overfitting when the training scale is limited. However, as the training data is expanded to the 10M scale, the dynamic-codebook methods consistently regain their lead, underscoring the importance of large-scale data for fully realizing the potential of neural-based manifold adaptation.

\subsection{Efficiency with Different Budgets}
\label{app:efficiency}
\begin{table}[t]
\centering
\caption{Average encoding and decoding latency per vector (in $\mu$s) on NVIDIA A800 GPU. For UNQ, $b$ and $H'$ denote codeword dimensionality and hidden layer dimensionality, respectively. For all methods, $K=256, D=128, H=256$.}

\label{tab:efficiency_all}
\small

\begin{tabular}{lcccccc}
\toprule
Method     & Setting & Encoding & Decoding & Setting & Encoding & Decoding \\
\cmidrule(r){2-4} \cmidrule(l){5-7}
& \multicolumn{3}{c}{$M=8$} & \multicolumn{3}{c}{$M=16$} \\
\midrule
UNQ     & $H'=1024, b=256$  &  2.3  & 1.5  & $H'=1024, b=256$  &   4.4  &  2.9  \\
QINCo   & $L=4$             & 91.8  & 3.3  & $L=4$             & 342.8  & 10.4  \\
\midrule
\multirow{3}{*}{\ours} 
        & $N=1, L=4$        & 96.2  & 1.1  & $N=1, L=8$        & 361.2  & 2.1 \\
        & $N=2, L=2$        & 75.4  & 0.7  & $N=2, L=4$        & 274.8  & 1.2 \\
        & $N=4, L=1$        & 67.9  & 0.5  & $N=4, L=2$        & 238.3  & 0.7 \\
\bottomrule
\end{tabular}

\end{table}

Tab.~\ref{tab:efficiency_all} reports the average encoding and decoding latency per vector, providing a clear comparison of computational efficiency. Our proposed \ours demonstrates a significant advancement in decoding efficiency compared to existing neural-based methods. 

Specifically, when the number of experts $N$ increases, the encoding stage also benefits from a noticeable speedup, as seen in the transition from $N=1$ to $N=4$. The data further reveals that \ours exhibits more pronounced acceleration when dealing with larger quantization budgets ($M$) and deeper expert network configurations ($L$). As shown in the table:
\begin{itemize}
    \item For the $M=8, NL=4$ configuration, \ours achieves a decoding speedup of over \textbf{6$\times$} compared to the SOTA baseline QINCo (0.5$\mu$s vs. 3.3$\mu$s).
    \item For the $M=16, NL=8$ configuration, the performance gap widens further, where our method achieves a remarkable \textbf{14.8$\times$} speedup in the decoding phase (0.7$\mu$s vs. 10.4$\mu$s).
\end{itemize}

It is worth noting that the encoding phase also benefits from a consistent speedup as the number of experts $N$ increases. Under a fixed total capacity (constant $NL$), a higher $N$ leads to a shallower depth $L$ for individual expert networks. This reduction in sequential network layers effectively decreases the computational overhead during the manifold-aware feature transformation. As shown in Table~\ref{tab:efficiency_all}, in the $M=16$ setting, the encoding latency is reduced from 361.2$\mu$s at $N=1$ to 238.3$\mu$s at $N=4$. 

These results validate that the dual-stream architecture effectively decouples the manifold adaptation from the heavy reconstruction process, enabling high-performance vector quantization that is highly suitable for latency-sensitive retrieval systems.

\subsection{Comparison with QINCo2}
\label{app:qinco2_compare}
\begin{table}[t]
\centering
\small
\caption{MSE comparison between QINCo2 and \ours.}
\begin{tabular}{lcc|cc|cc|cc}
\toprule
& \multicolumn{2}{c|}{Deep1M}
& \multicolumn{2}{c|}{BigANN1M}
& \multicolumn{2}{c|}{FB-ssnpp1M}
& \multicolumn{2}{c}{Contriever1M} \\
Method
& w/o BS & w/ BS
& w/o BS & w/ BS
& w/o BS & w/ BS
& w/o BS & w/ BS \\
\midrule
QINCo2 & 0.15 & 0.12 & 1.40 & 1.13 & 9.01 & 8.56 & 1.60 & 1.44 \\
\ours  & 0.14 & 0.12 & 1.29 & 1.10 & 8.85 & 8.33 & 1.52 & 1.39 \\
\bottomrule
\end{tabular}
\label{tab:qinco2_compare}
\end{table}

We additionally compare \ours with the recent QINCo2~\cite{qinco2} under comparable experimental settings. 
For consistency, both methods are evaluated with the same compression budget (8 bytes, 500K training subset, and \(N \times L = 16\)), and beam search is applied where applicable. 
To focus on the architectural differences between the two methods, we do not enable additional generic acceleration techniques such as candidate pre-selection for either model.

As shown in Tab.~\ref{tab:qinco2_compare}, \ours achieves competitive or consistently lower reconstruction error across datasets under both standard and beam-search settings, while exhibiting similar trends in beam-search improvement.

\subsection{Accuracy--Latency Trade-off under Different Expert Configurations}
\label{app:pareto}
\begin{figure*}[t]
  \begin{center}
    \centerline{\includegraphics[width=0.95\textwidth]{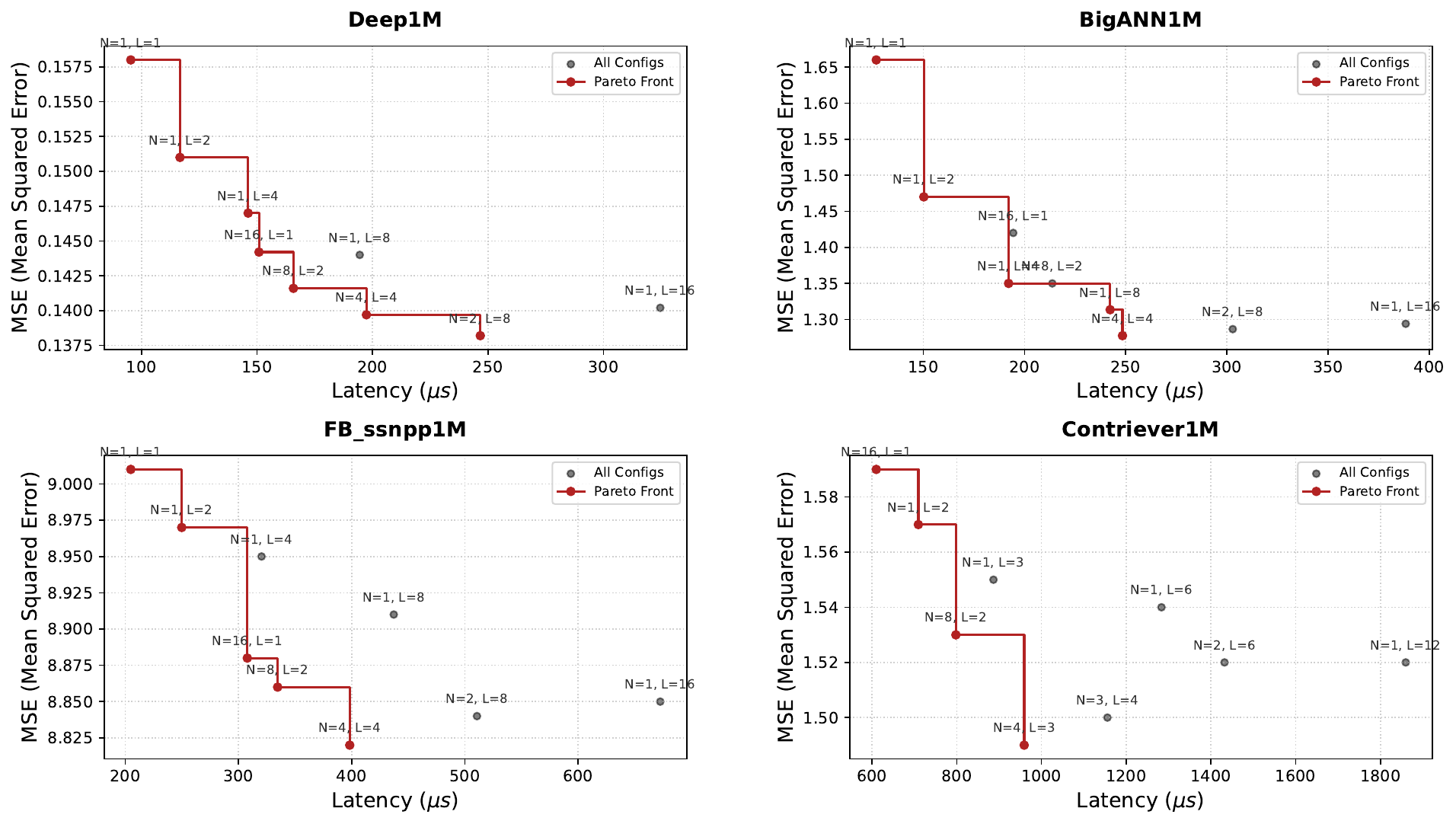}}
    \caption{Trade-off analysis between MSE and end-to-end inference latency ($\mu s$). Each sub-figure highlights the Pareto-optimal configurations (red line). Points are annotated with their respective hyperparameters (N, L).}
    \label{fig:pareto}
  \end{center} 
\end{figure*}

To further analyze the effect of expert specialization, we evaluate RQ-MoE under different numbers of second-level experts while keeping the overall parameter budget comparable. Specifically, we maintain a fixed expert budget by controlling the product of the number of experts and expert depth (\(N \times L\)), such that the performance gain mainly reflects structured conditional computation rather than increased model capacity.

Fig.~\ref{fig:pareto} presents the accuracy--latency Pareto frontier under different expert configurations on the 500K training setting. The results show that increasing the number of experts consistently improves reconstruction quality by enhancing localized manifold modeling capacity, while introducing additional transformation overhead during decoding. This forms a controllable trade-off between reconstruction accuracy and decoding efficiency.

Importantly, the dual-stream decoupling mechanism already provides strong performance under the \(N=1\) setting, indicating that the primary gain comes from the decoupled instruction--quantization design rather than expert scaling alone. Increasing \(N\) further serves as an orthogonal enhancement that improves local adaptive modeling for more complex data distributions.

\subsection{Dynamic Rates on Different Datasets}
\label{app:rates}
\begin{figure*}[t]
  \begin{center}
    \centerline{\includegraphics[width=\textwidth]{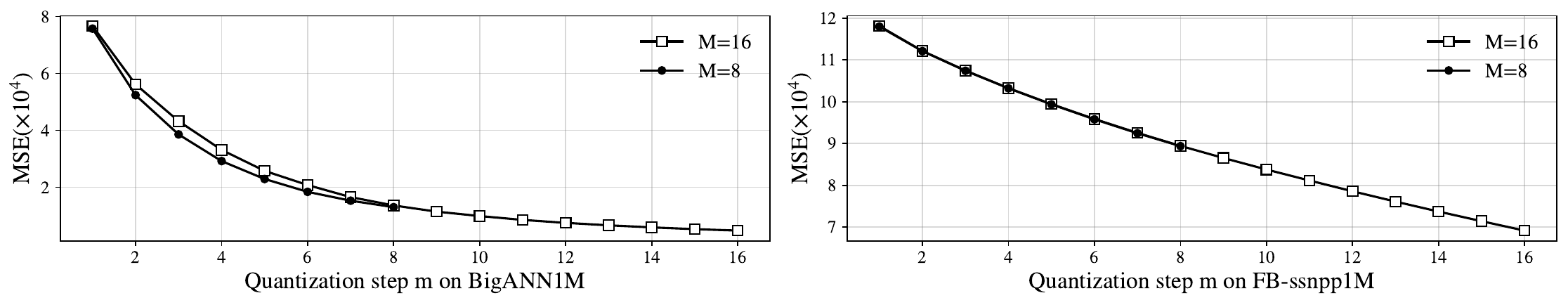}}
    \caption{
     MSE for \ours trained for 8-byte and 16-byte encodings on FB-ssnpp1M and BigANN1M, truncated at a varying number of bytes.
    }
    \label{fig:combine}
  \end{center}
  \vspace{-15pt}
\end{figure*}

We further investigate the adaptability of our model across different bit-rates by evaluating its performance under partial decoding scenarios. Fig.~\ref{fig:combine} illustrates the Mean Squared Error (MSE) per quantization step $m$ for both 8-byte ($M=8$) and 16-byte ($M=16$) models. 

A key observation from the results is that for both the FB-ssnpp and BigANN datasets, the MSE curves of the 8-byte and 16-byte models are virtually identical for all steps $m \le 8$. This indicates that a single model trained for a higher quantization budget can be effectively utilized for lower-rate approximations without retraining. Specifically:

\begin{itemize}
    \item \textbf{Data Distribution and Error Convergence}: In Fig.~\ref{fig:combine} (left), we observe that the MSE for the FB-ssnpp dataset decreases in a nearly linear fashion. This linear trend suggests that the underlying data distribution in FB-ssnpp, which consists of image descriptors for content moderation, is relatively uniform across the manifold. This leads to a consistent reduction in reconstruction error as more quantization levels are added. In contrast, the BigANN dataset (right) shows a more rapid initial decrease, reflecting different local density characteristics inherent in SIFT descriptors.
    \item \textbf{Compressed Domain Rate Adjustment}: The consistency across different $M$ values allows for dynamic rate adjustment. Vectors can be lossily compressed or refined simply by cropping or extending their codes in the compressed domain to meet real-time storage or latency constraints.
    \item \textbf{Amortized Training and Management}: This property implies that the loss at step $m$ has negligible interference with the parameters optimized for steps $< m$. Consequently, a single $M=16$ model can serve as a universal quantizer for any $m \le 16$, significantly reducing the computational cost of training multiple models and simplifying model management in production environments.
\end{itemize}


\end{document}